\newcommand{\grag}{\textsc{Gnn-Rag}\xspace}
\newcommand{\gragx}{\textsc{Gnn-Rag}}
\newcommand{\lmsr}{$\text{LM}_{\text{SR}}$\xspace}
\def\Figref#1{Figure~\ref{#1}}
\def\Secref#1{Section~\ref{#1}}
\def\eqref#1{equation~\ref{#1}}
\def\Eqref#1{Equation~\ref{#1}}
\def\1{\bm{1}}
\def\vh{{\bm{h}}}
\def\vm{{\bm{m}}}
\def\vq{{\bm{q}}}
\def\vr{{\bm{r}}}
\DeclareMathAlphabet{\mathsfit}{\encodingdefault}{\sfdefault}{m}{sl}
\SetMathAlphabet{\mathsfit}{bold}{\encodingdefault}{\sfdefault}{bx}{n}
\def\gG{{\mathcal{G}}}
\def\gN{{\mathcal{N}}}
\def\gR{{\mathcal{R}}}
\def\gV{{\mathcal{V}}}
\def\sR{{\mathbb{R}}}
\newcommand{\softmax}{\mathrm{softmax}}
\theoremstyle{plain}
\newtheorem{theorem}{Theorem}[section]
\newtheorem{lemma}[theorem]{Lemma}
\theoremstyle{definition}
\newtheorem{definition}[theorem]{Definition}
\theoremstyle{remark}
\title{\grag: Graph Neural  Retrieval for Large Language Model Reasoning}
\author{%
  Costas Mavromatis \\
  University of Minnesota\\
  \texttt{mavro016@umn.edu} \\
  \And
  George Karypis \\
  University of Minnesota\\
  \texttt{karypis@umn.edu} \\
}
\begin{document}

\maketitle

\begin{abstract}
  Knowledge Graphs (KGs) represent human-crafted factual knowledge in the form of triplets \textit{(head, relation, tail)}, which collectively form a graph. Question Answering over KGs (KGQA) is the task  of answering natural questions grounding the reasoning to the information provided by the KG. Large Language Models (LLMs) are the state-of-the-art models for QA tasks due to their remarkable ability to understand natural language. On the other hand, Graph Neural Networks (GNNs) have been widely used for KGQA as they can handle the complex graph information stored in the KG. In this work, we introduce \textsc{Gnn-Rag}, a novel method for combining language understanding abilities of LLMs with the reasoning abilities of GNNs in a retrieval-augmented generation (RAG) style. First, a GNN reasons over a dense KG subgraph to retrieve answer candidates for a given question. Second, the shortest paths in the KG  that connect question entities and answer candidates are extracted to represent KG reasoning paths. The extracted paths are verbalized and given as input for LLM reasoning with RAG. In our \textsc{Gnn-Rag} framework, the GNN acts as a dense subgraph reasoner to extract useful graph information, while the LLM leverages its natural language processing ability for ultimate KGQA. Furthermore, we develop a retrieval augmentation (RA) technique to further boost KGQA performance with \grag. Experimental results show that \textsc{Gnn-Rag} achieves state-of-the-art performance in two widely used KGQA benchmarks (WebQSP and CWQ), outperforming or matching GPT-4 performance with a 7B tuned LLM. In addition, \textsc{Gnn-Rag} excels on multi-hop and multi-entity questions outperforming competing approaches by 8.9--15.5\% points at answer F1. 
We provide the code and KGQA results at \url{https://github.com/cmavro/GNN-RAG}.
  
\end{abstract}

\section{Introduction}

Large Language Models (LLMs)~\citep{brown2020language,bommasani2021opportunities,chowdhery2023palm} are the state-of-the-art models in many NLP tasks due to their remarkable ability to understand natural language. LLM's power stems from pretraining on large corpora of textual data to obtain general human knowledge~\citep{kaplan2020scaling,hoffmann2022training}. However, because pretraining is costly and time-consuming~\citep{gururangan2020don}, LLMs cannot easily adapt to new or in-domain knowledge and are prone to hallucinations~\citep{zhang2023siren}.

Knowledge Graphs (KGs)~\citep{vrandevcic2014wikidata} are databases that store information in structured form that can be easily updated. KGs represent human-crafted factual knowledge in the form of triplets \textit{(head, relation, tail)}, e.g., \texttt{<Jamaica $\rightarrow$ language\_spoken $\rightarrow$ English>}, which collectively form a graph. In the case of KGs, the stored knowledge is updated by fact addition or removal. As KGs capture complex interactions between the stored entities, e.g., multi-hop relations, they are widely used for knowledge-intensive task, such as Question Answering (QA)~\citep{pan2024unifying}.

\begin{wrapfigure}{r}{\wd0} %
    \centering
    \vspace{-0.25in}
    \pgfplotsset{%
    name nodes near coords/.style={
        every node near coord/.append style={
            name=#1-\coordindex,
            alias=#1-last,
        },
    },
    name nodes near coords/.default=coordnode
}

\pgfplotsset{/pgfplots/bar cycle list/.style={/pgfplots/cycle list={
            {olive,fill=olive!40!white,mark=none},
            {brown,fill=brown!40!white,mark=none},
            {teal,fill=teal!50!white,mark=none},
            {black,fill=gray,mark=none},},},}
            
\begin{tikzpicture}
\tikzstyle{every node}=[font=\scriptsize]
    \begin{axis}[
            ybar=.22cm,
            bar width=.3cm,
            enlarge x limits={0.4},
            width=0.4\textwidth,
            height=0.25\textwidth,
            legend style={at={(.45,1.4)},
                anchor=north,legend columns=-1},
            symbolic x coords={ Multi-Hop, Multi-Entity},
            xtick=data,
            nodes near coords,
            nodes near coords align={vertical},
            ymin=30,ymax=90,
            compat=1.5, 
            ylabel={Answer F1 (\%)},
            ylabel near ticks,
        ]
\addplot+[name nodes near coords=bn] coordinates {(Multi-Hop,41.1)[{41.1}] (Multi-Entity,46.9)[{46.9}]};
 \addplot+[name nodes near coords=rn] coordinates {(Multi-Hop, 61.3)[{61.3}] (Multi-Entity,61.7)[{61.7}]};
 \addplot+[name nodes near coords=ln] coordinates {(Multi-Hop,70.2)[{70.2}] (Multi-Entity,77.2)[{77.2}]};

        \legend{No RAG,+KG-RAG,+\textbf{\grag}}
    \end{axis}

\end{tikzpicture}
    \caption{Retrieval effect on  \underline{multi-hop/entity} KGQA. Our \textbf{\grag} \textbf{outperforms} existing KG-RAG methods by 8.9--15.5\% points. }
    \vspace{-0.15in}
    \label{fig:intro}
\end{wrapfigure}

Retrieval-augmented generation (RAG) is a framework that alleviates LLM hallucinations by enriching the input context with up-to-date and accurate information~\citep{lewis2020retrieval}, e.g., obtained from the KG. In the KGQA task, the goal is to answer natural questions grounding the reasoning to the information provided by the KG. For instance, the input for RAG becomes ``\texttt{Knowledge: Jamaica $\rightarrow$ language\_spoken $\rightarrow$ English \textbackslash n Question: Which language do Jamaican people speak?}'', where the LLM has access to KG information for answering the question.

RAG's performance highly depends on the KG facts that are retrieved~\citep{wu2023retrieve}. The challenge is that KGs store complex graph information (they usually consist of millions of facts) and retrieving the right information requires effective graph processing, while retrieving irrelevant information may confuse the LLM during its KGQA reasoning~\citep{he2024gretriever}. Existing retrieval methods that rely on LLMs to retrieve relevant KG information (LLM-based retrieval) underperform on multi-hop KGQA as they cannot handle complex graph information~\citep{baek2023knowledge,luo2024rog} or they need the internal knowledge of very large LMs, e.g., GPT-4, to compensate for missing information during KG retrieval~\citep{sun2024tog}.

In this work, we introduce \grag, a novel method for improving RAG for KGQA. \grag relies on Graph Neural Networks (GNNs)~\citep{mavromatis2022rearev}, which  are powerful graph representation learners, to handle the complex graph information stored in the KG. Although GNNs cannot understand natural language the same way LLMs do, \grag repurposes their graph processing power for retrieval.  First, a GNN reasons over a dense KG subgraph to retrieve answer candidates for a given question. Second, the shortest paths in the KG  that connect question entities and GNN-based answers are extracted to represent useful KG reasoning paths. The extracted paths are verbalized and given as input for LLM reasoning with RAG. 
Furthermore, we show that \grag can be augmented with LLM-based retrievers to further boost KGQA performance.
Experimental results show \grag's superiority over competing RAG-based systems for KGQA by outperforming them by up to 15.5\% points at complex KGQA performance (\Figref{fig:intro}). 
Our \textbf{contributions} are summarized below:
\begin{itemize}
    \item \textbf{Framework}: \grag repurposes GNNs for KGQA retrieval to enhance the reasoning abilities of LLMs. In our \textsc{Gnn-Rag} framework, the GNN acts as a dense subgraph reasoner to extract useful graph information, while the LLM leverages its natural language processing ability for ultimate KGQA.  Moreover, our retrieval analysis (\Secref{sec:analysis}) guides the design of a retrieval augmentation (RA) technique to boost \gragx's performance (\Secref{sec:ret_aug}). 
    \item \textbf{Effectiveness \& Faithfulness}: \grag achieves state-of-the-art performance in two widely
used KGQA benchmarks (WebQSP and CWQ). \grag retrieves multi-hop information that is necessary for faithful LLM reasoning on complex questions (8.9--15.5\% improvement; see \Figref{fig:intro}).
    \item \textbf{Efficiency}: \grag improves vanilla LLMs on KGQA performance without incurring additional LLM calls as existing RAG systems for KGQA require. In addition, \grag outperforms or matches GPT-4 performance with a 7B tuned LLM.
\end{itemize}

\section{Related Work}

\textbf{KGQA Methods}. KGQA methods fall into two categories~\citep{lan2022complex}: (A) Semantic Parsing (SP) methods and (B) Information Retrieval (IR) methods. SP methods~\citep{sun2020sparqa,lan-jiang-2020-query,ye-etal-2022-rng} learn to transform the given question into a query of logical form, e.g., SPARQL query. The transformed query is then executed over the KG to obtain the answers. However, SP methods require ground-truth logical queries for training, which are time-consuming to annotate in practice, and may lead non-executable queries due to syntactical or semantic errors~\citep{das2021case,yu2022decaf}. IR methods~\citep{sun-etal-2018-open,sun2019pullnet} focus on the weakly-supervised KGQA setting, where only question-answer pairs are given for training. IR methods retrieve  KG information, e.g., a KG subgraph~\citep{zhang2022subgraph}, which is used as input during KGQA reasoning. In Appendix~\ref{app:analysis}, we analyze the reasoning abilities of the prevailing models (GNNs \& LLMs) for KGQA, and in  \Secref{sec:gnn-rag}, we propose \grag which leverages the strengths of both of these models.

\textbf{Graph-augmented LMs}. Combining LMs with graphs that store information in natural language is an emerging research area~\citep{jin2023large}. There are two main directions, (i) methods that enhance LMs with \emph{latent} graph information~\citep{zhang2022greaselm,tian2024graph,huang2024can}, e.g., obtained by GNNs, and (ii) methods that insert \emph{verbalized} graph information at the input~\citep{UnifiedSKG,jiang2023structgpt,jin2024graph-cot}, similar to RAG. The methods of the first direction are limited because of the modality mismatch between language and graph, which can lead to inferior performance for knowledge-intensive tasks~\citep{mavromatis2024sempool}. On the other hand, methods of the second direction may fetch noisy information when the underlying graph is large and such information can decrease the LM's reasoning ability~\citep{wu2023retrieve,he2024gretriever}. \grag employs GNNs for information retrieval and RAG for KGQA reasoning, achieving superior performance over existing approaches.

\begin{figure*}[tb]
    \centering
    \includegraphics[width=1\linewidth]{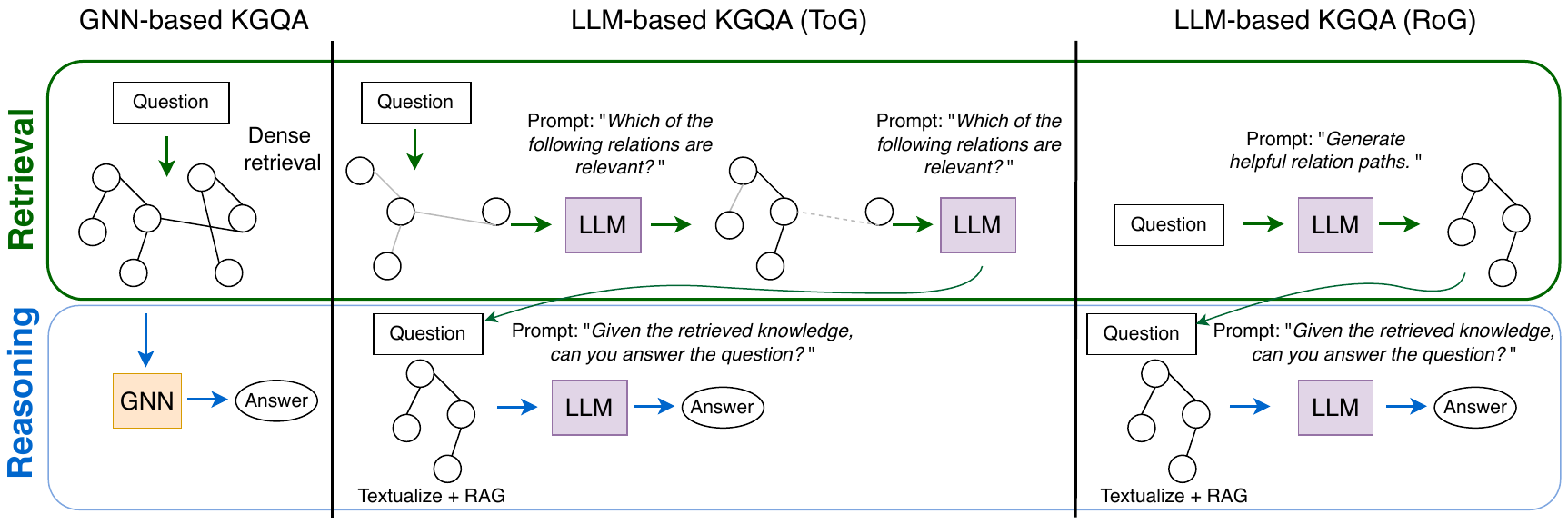}
    \caption{The landscape of existing KGQA methods. GNN-based methods reason on dense subgraphs as they can handle complex and multi-hop graph information. LLM-based methods employ the same LLM for both retrieval and reasoning due to its ability to understand natural language.}
    \label{fig:retrieval-reasoning}
\end{figure*}

\section{Problem Statement \& Background} \label{sec:problem}

\textbf{KGQA}. We are given a KG $\gG$ that contains facts represented as $(v,r,v')$, where $v$ denotes the head entity, $v'$ denotes the tail entity, and $r$ is the corresponding relation between the two entities.
Given $\gG$ and a natural language question $q$, the task of KGQA is to extract a set of entities $\{a\} \in \gG$ that correctly answer $q$. Following previous works~\citep{lan2022complex}, question-answer pairs are given for training, but not the ground-truth paths that lead to the answers. 

\textbf{Retrieval \& Reasoning}. As KGs usually contain millions of facts and nodes, a smaller question-specific subgraph $\gG_q$ is retrieved for a question $q$, e.g., via entity linking and neighbor extraction~\citep{yih2015semantic}. Ideally, all correct answers for the question are contained in the retrieved subgraph,  $\{a\} \in \gG_q$.
The retrieved subgraph $\gG_q$ along with the question $q$ are used as input to a reasoning model, which outputs the correct answer(s). The prevailing reasoning models for the KGQA setting studied are GNNs and LLMs. 

\textbf{GNNs}. KGQA can be regarded as a node classification problem, where KG entities are classified as answers vs. non-answers for a given question. 
GNNs~\cite{kipf2016semi,velivckovic2017graph,schlichtkrull2018modeling} are powerful graph representation learners suited for tasks such as node classification. 
GNNs update the representation $\vh_v^{(l)}$ of node $v$ at layer $l$ by aggregating messages $\vm^{(l)}_{vv'}$ from each neighbor $v'$. During KGQA, the message passing is also conditioned to the given question $q$~\citep{he2021improving}. For readability purposes, we present the following GNN update for KGQA,
\begin{equation}
    \vh_v^{(l)}= \psi \Big(\vh_v^{(l-1)}, \sum_{v' \in \mathcal{N}_v } \omega(q, r) \cdot \vm_{vv'}^{(l)} \Big),
    \label{eq:gnn-th}
\end{equation}
where function $\omega(\cdot)$ measures how relevant relation $r$ of fact $(v, r, v')$ is to question $q$. Neighbor messages $\vm_{vv'}^{(l)}$ are aggregated by a sum-operator $\sum$, which is typically employed in GNNs. Function $\psi(\cdot)$ combines representations from consecutive GNN layers.

\textbf{LLMs}. LLMs for KGQA use KG information to perform retrieval-augmented generation (RAG) as follows. The retrieved subgraph is first converted into natural language so that it can be processed by the LLM. The input given to the LLM contains the KG factual information along with the  question and a prompt. For instance, the input becomes ``\texttt{Knowledge: Jamaica $\rightarrow$ language\_spoken $\rightarrow$ English \textbackslash n Question: Which language do Jamaican people speak?}'', where the LLM has access to KG information for answering the question. 

\textbf{Landscape of KGQA methods}. \Figref{fig:retrieval-reasoning} presents the landscape of existing KGQA methods with respect to KG retrieval and reasoning. GNN-based methods, such as GraftNet~\citep{sun-etal-2018-open}, NSM~\citep{he2021improving}, and ReaRev~\citep{mavromatis2022rearev}, reason over a dense KG subgraph leveraging the GNN's ability to handle complex graph information. 
Recent LLM-based methods leverage the LLM's power for both retrieval and reasoning. ToG~\citep{sun2024tog} uses the LLM to retrieve relevant facts hop-by-hop. RoG~\citep{luo2024rog} uses the LLM to generate plausible relation paths which are then mapped on the KG to retrieve the relevant information.

\textbf{LLM-based Retriever}. We present an example of an LLM-based retriever (RoG; ~\citep{luo2024rog}). Given training question-answer pairs, RoG extracts the shortest paths to the answers starting from question entities for fine-tuning the retriever. Based on the extracted paths, an LLM (LLaMA2-Chat-7B~\citep{touvron2023llama}) is fine-tuned to generate reasoning paths given a question $q$ as 
\begin{equation}
    \text{LLM}(\text{prompt}, q) \Longrightarrow \{ r_1 \rightarrow \cdots \rightarrow  r_t \}_k,
    \label{eq:rog}
\end{equation}
where the prompt is ``\texttt{Please generate a valid relation path that can be helpful for answering the following question: \{Question\}}''.
Beam-search decoding is used to generate $k$ diverse sets of reasoning paths for better answer coverage, e.g., relations \texttt{\{<official\_language>, <language\_spoken>\}} for the question ``\texttt{Which language do Jamaican people speak?}''. The generated paths are mapped on the KG, starting from the question entities, in order to retrieve the intermediate entities for RAG, e.g.,  \texttt{<Jamaica $\rightarrow$ language\_spoken $\rightarrow$ English>}.

\section{\grag} \label{sec:gnn-rag}

We introduce \textsc{Gnn-Rag}, a novel method for combining language understanding abilities of LLMs with the reasoning abilities of GNNs in a retrieval-augmented generation (RAG) style. We provide the overall framework in \Figref{fig:gnn-rag}.  First, a GNN reasons over a dense KG subgraph to retrieve answer candidates for a given question. Second, the shortest paths in the KG  that connect question entities and GNN-based answers are extracted to represent useful KG reasoning paths. The extracted paths are verbalized and given as input for LLM reasoning with RAG. In our \textsc{Gnn-Rag} framework, the GNN acts as a dense subgraph reasoner to extract useful graph information, while the LLM leverages its natural language processing ability for ultimate KGQA.

\begin{figure*}[tb]
    \centering
    \includegraphics[width=0.9\linewidth]{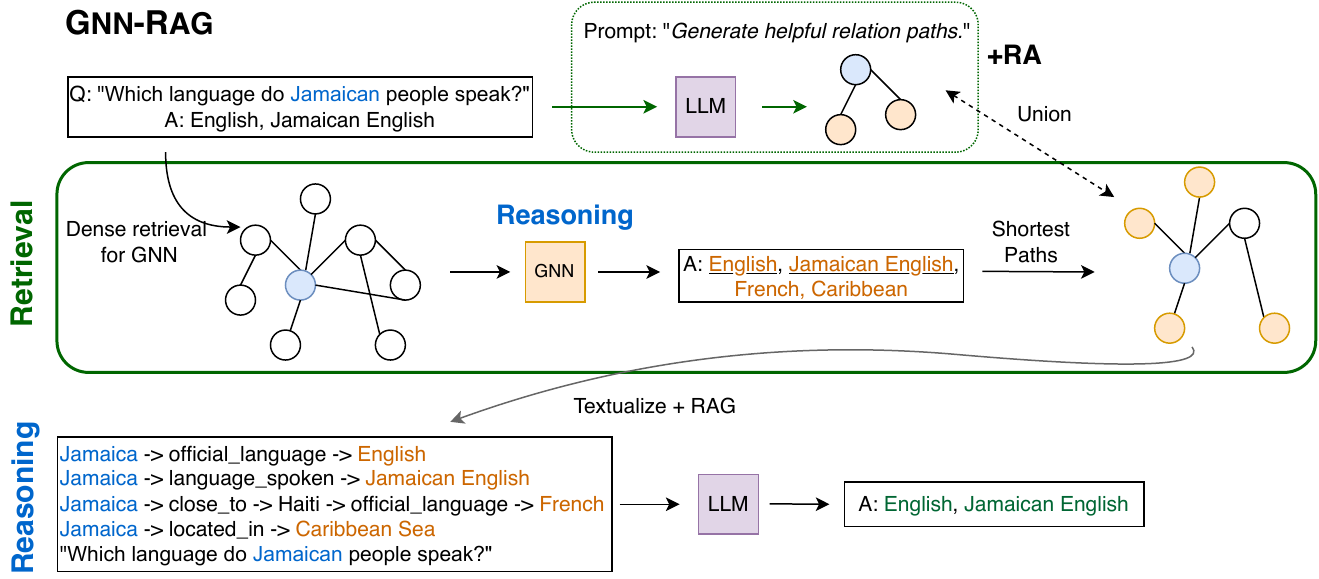}
    \caption{\grag: The GNN reasons over a dense subgraph to retrieve candidate answers, along with the corresponding reasoning paths (shortest paths from question entities to answers). The retrieved reasoning paths --optionally combined with retrieval augmentation (RA)-- are verbalized and given to the LLM for RAG.}
    \label{fig:gnn-rag}
\end{figure*}

\subsection{GNN} \label{sec:gnn}

In order to retrieve high-quality reasoning paths via \grag, we leverage state-of-the-art GNNs for KGQA. We prefer GNNs over other KGQA methods, e.g., embedding-based methods~\citep{saxena2020improving}, due to their ability to handle  complex graph interactions and answer multi-hop questions. GNNs mark themselves as good candidates for retrieval due to their architectural benefit of exploring diverse reasoning paths~\citep{mavromatis2022rearev,choi2024nutrea} that  result in high answer recall.

When GNN reasoning is completed ($L$ GNN updates via \Eqref{eq:gnn-th}), all nodes in the subgraph are scored as answers vs. non-answers based on their final GNN representations $\vh_v^{(L)}$, followed by the $\softmax(\cdot)$ operation. The GNN parameters are optimized via node classification (answers vs. non-answers) using the training question-answer pairs. During inference, the nodes with the highest probability scores, e.g., above a probability threshold, are returned as candidate answers, along with the shortest paths connecting the question entities with the candidate answers (reasoning paths). The retrieved reasoning paths are used as input for LLM-based RAG. 

Different GNNs may fetch different reasoning paths for RAG. As presented in ~\Eqref{eq:inst}, GNN reasoning depends on the question-relation matching operation $\omega(q,r)$. A common implementation of $\omega(q,r)$ is  $\phi(\vq^{(k)} \odot \vr)$~\citep{he2021improving},  where function $\phi$ is a  neural network, and $\odot$ is the element-wise multiplication.  Question representations $\vq{(k)}$ and KG relation representations $\vr$ are encoded via a shared pretrained LM~\citep{jiang2023unikgqa} as
\begin{equation}
    \vq^{(k)} = \gamma_k\big( \text{LM}(q)\big),  \; \; \vr = \gamma_c \big( \text{LM}(r) \big),
    \label{eq:inst}
\end{equation}
where $\gamma_k$ are attention-based pooling neural networks so that different representations $\vq^{(k)}$ attend to different question tokens, and $\gamma_\text{c}$ is the \texttt{[CLS]} token pooling. 

In Appendix~\ref{app:thm}, we develop a Theorem  that shows that the GNN's output depends on the question-relation matching operation $\omega(q,r)$ and as result, the choice of the LM in \Eqref{eq:inst} plays an important role regarding which answer nodes are retrieved. 
Instead of trying different GNN architectures, we employ different LMs in \Eqref{eq:inst} that result into different output node representations. Specifically, we train two separate GNN models, one using pretrained LMs, such as  SBERT~\citep{reimers2019sentence}, and one using \lmsr, a pretrained LM for question-relation matching over the KG~\citep{zhang2022subgraph}. Our experimental results suggest that, although these GNNs retrieve different KG information, they both improve RAG-based KGQA.

\subsection{LLM} \label{sec:llm}

After obtaining the reasoning paths by \grag, we verbalize them and give them as input to a downstream LLM, such as ChatGPT or LLaMA. However, LLMs are sensitive to the input prompt template and the way that the graph information is verbalized.  

To alleviate this issue, we opt to follow RAG prompt tuning~\citep{lin2023radit,zhang2024raft} for LLMs that have open weights and are feasible to train. A LLaMA2-Chat-7B model is fine-tuned based on the training question-answer pairs to generate a list of correct answers, given the prompt: \\
``\texttt{Based on the reasoning paths, please answer the given question.\textbackslash n Reasoning Paths: \{Reasoning Paths\} \textbackslash n Question: \{Question\}}''. \\
The reasoning paths are verbalised as  ``\texttt{\{question entity\} $\rightarrow$ \{relation\} $\rightarrow$ \{entity\} $\rightarrow$ $\cdots$ $\rightarrow$ \{relation\} $\rightarrow$ \{answer entity\} \textbackslash n}'' (see \Figref{fig:gnn-rag}). \\
During training, the reasoning paths are the shortest paths from question entities to answer entities. During inference, the reasoning paths are obtained by \grag. 

\subsection{Retrieval Analysis: Why GNNs \& Their Limitations} \label{sec:analysis}

\begin{wraptable}{R}{0.5\textwidth}
\vspace{-0.2in}
	\centering
	\caption{Retrieval results for WebQSP.}
	\label{tab:analysis_retrieval}%
	\resizebox{\linewidth}{!}{
	\begin{threeparttable}
		\begin{tabular}{@{}l|cc|cc@{}}
			\toprule

        \multirow{2}{*}{\textbf{Retriever}} & \multicolumn{2}{c|}{\textbf{1-hop questions}} & \multicolumn{2}{c}{\textbf{2-hop questions}} \\
            & \#Input Tok. & \%Ans. Cov. &  \#Input Tok. & \%Ans. Cov. \\       
            \midrule
            RoG~\citep{luo2024rog} & 150& \textbf{87.1} & 435 & 82.1 \\
            \midrule
            GNN ($L=1$) & 112 & 83.6 & 2,582 & 79.8 \\
             GNN ($L=3$) & 105 & 82.4 & 357& \textbf{88.5}\\
			\bottomrule
		\end{tabular}%
		\end{threeparttable}
}

\end{wraptable}%

GNNs leverage the graph structure to retrieve relevant parts of the KG that contain multi-hop information.  We provide experimental evidence on why GNNs are good retrievers for multi-hop KGQA. 
We train two different GNNs, a deep one ($L=3$) and a shallow one ($L=1$), and measure their retrieval capabilities. We report the `Answer Coverage' metric, which evaluates whether the retriever is able to fetch at least one correct answer for RAG. Note that `Answer Coverage' does not measure downstream KGQA performance but whether the retriever fetches relevant KG information. `\#Input Tokens’ denotes the median number of the input tokens of the retrieved KG paths.
Table~\ref{tab:analysis_retrieval} shows GNN retrieval results for single-hop and multi-hop questions of the WebQSP dataset compared to an LLM-based retriever (RoG; \Eqref{eq:rog}). The results indicate that deep GNNs ($L=3$) can handle the complex graph structure and retrieve useful \emph{multi-hop} information more effectively (\%Ans. Cov.) and efficiently (\#Input Tok.) than the LLM and the shallow GNN.

On the other hand, the limitation of GNNs is for  simple (1-hop) questions, where accurate question-relation matching is more important than deep graph search (see our Theorem in Appendix~\ref{app:analysis} that states this GNN limitation). In such cases, the LLM retriever is better at selecting the right KG information due to its natural language understanding abilities (we provide an example later in \Figref{fig:faith-ra}).

\subsection{Retrieval Augmentation (RA)} \label{sec:ret_aug}

Retrieval augmentation (RA) combines the retrieved KG information from different approaches to increase diversity and answer recall. Motivated by the results in \Secref{sec:analysis}, we present a RA technique (\textbf{\gragx+RA}), which complements the GNN retriever with an LLM-based retriever to combine their strengths on multi-hop and single-hop questions, respectively. Specifically, we experiment with the RoG retrieval, which is described in \Eqref{eq:rog}.  During inference, we take the union of the reasoning paths retrieved by the two retrievers.

A downside of LLM-based retrieval is that it requires multiple generations (beam-search decoding) to retrieve diverse paths, which trades efficiency for effectiveness (we provide a performance analysis in Appendix~\ref{app:analysis}). A cheaper alternative is to perform RA by combining the outputs of different GNNs, which are equipped with different LMs in \Eqref{eq:inst}. Our \textbf{\gragx+Ensemble} takes the union of the retrieved paths of the two different GNNs (GNN+SBERT \& GNN+\lmsr) as input for RAG.

\section{Experimental Setup}

\textbf{KGQA Datasets}.
We experiment with two widely used KGQA benchmarks: WebQuestionsSP (WebQSP)~\citep{yih2015semantic}, 
Complex WebQuestions 1.1 (CWQ)~\citep{talmor2018web}.
\textbf{WebQSP} contains 4,737 natural language questions that are answerable using a subset Freebase KG~\citep{bollacker2008freebase}. The questions require up to 2-hop reasoning within this KG. \textbf{CWQ} contains 34,699 total complex questions that require up to 4-hops of reasoning over the KG. We provide the detailed dataset statistics in Appendix~\ref{app:exp}.

\textbf{Implementation \& Evaluation}. 
For subgraph retrieval, we use the linked entities and the pagerank algorithm to extract dense graph information~\citep{he2021improving}. We employ ReaRev~\citep{mavromatis2022rearev}, which is a GNN targeting at \emph{deep} KG reasoning (\Secref{sec:analysis}), for \grag.  The default implementation is to combine ReaRev with SBERT as the LM in \Eqref{eq:inst}. In addition, we combine ReaRev with \lmsr, which is obtained by following the implementation of SR~\citep{zhang2022subgraph}.  We employ RoG~\citep{luo2024rog} for RAG-based prompt tuning (\Secref{sec:llm}).
For evaluation, we adopt Hit, Hits@1 (H@1), and F1 metrics. \underline{Hit} measures if \underline{any} of the true answers is found in the generated response, which is typically employed when evaluating LLMs. \underline{H@1} is the accuracy of the \underline{top/first} predicted answer. \underline{F1} takes into account the \underline{recall} (number of true answers found) and the \underline{precision} (number of false answers found) of the generated answers. 
Further experimental setup details are provided in Appendix~\ref{app:exp}.

\begin{table*}[tb]
	\centering
	\caption{Performance comparison of different methods on the two KGQA benchmarks. We denote the \textbf{best} and \underline{second-best} method.}
	\label{tab:main_results}%
	\resizebox{0.75\columnwidth}{!}{
	\begin{threeparttable}
		\begin{tabular}{@{}c|lcccccc@{}}
			\toprule
			 \multirow{2}{*}{\textbf{Type}} & \multirow{2}{*}{\textbf{Method}} & \multicolumn{3}{c}{\textbf{WebQSP}} & \multicolumn{3}{c}{\textbf{CWQ}} \\  \cmidrule(l){3-5} \cmidrule(l){6-8} 
		      & & Hit & H@1 & F1 & Hit & H@1 & F1 \\
            \midrule
            \multirow{4}{*}{Embedding} & KV-Mem~\cite{miller2016key} &	-- & 46.7 &  38.6 & -- & 21.1 & -- \\
            & EmbedKGQA~\cite{saxena2020improving} & -- & 66.6 & -- &  -- & -- & -- \\
			& TransferNet~\cite{shi2021transfernet} & -- & 71.4 & -- & -- & 48.6 & --  \\
			& Rigel~\cite{sen2021expanding} & -- & 73.3 & -- & -- & 48.7 & -- \\ \midrule
            \multirow{8}{*}{GNN} & GraftNet~\cite{sun-etal-2018-open} & -- & 66.7 & 62.4  & -- & 36.8 & 32.7   \\
            & PullNet~\cite{sun2019pullnet} & -- & 68.1 & -- &  -- & 45.9 & --\\
            & NSM~\cite{he2021improving}	&	-- & 68.7 & 62.8 & -- & 47.6 & 42.4 \\
            & SR+NSM(+E2E)~\citep{zhang2022subgraph} & -- & 69.5 & 64.1 & -- & 50.2 & 47.1 \\
            & NSM+h~\cite{he2021improving} & -- & 74.3 & 67.4 & -- & 48.8 & 44.0 \\
            & SQALER~\cite{atzeni2021sqaler} & -- & 76.1 & -- & -- & --  & -- \\
            & UniKGQA~\citep{jiang2023unikgqa} & -- &  77.2 &  72.2 & -- & 51.2 & 49.1 \\
            & ReaRev~\citep{mavromatis2022rearev} & -- & 76.4 & 70.9 & -- &  52.9 & 47.8   \\ 
            & ReaRev + \lmsr  & -- & 77.5 & \underline{72.8} & -- &  53.3 & 49.7   \\ 
            \midrule
            \multirow{5}{*}{LLM}             & Flan-T5-xl \citep{chung2024scaling}     & 31.0   & --       & --             & 14.7          & --    & --         \\
                                          & Alpaca-7B \citep{alpaca2023}         & 51.8          & --    & --          & 27.4          & -- & --             \\
                                          & LLaMA2-Chat-7B \citep{touvron2023llama} & 64.4   &      -- & --   & 34.6  &   -- & --          \\
                                          & ChatGPT                                 & 66.8          &-- & --& 39.9          &-- & --\\
                                          & ChatGPT+CoT                             & 75.6          & -- & -- & 48.9          & -- & -- \\ \midrule
            \multirow{7}{*}{KG+LLM}    
            & KD-CoT~\citep{wang2023knowledge} & 68.6 & -- & 52.5 & 55.7 & -- & -- \\
            & StructGPT~\citep{jiang2023structgpt} & 72.6 & -- & -- & -- & -- & -- \\
            & KB-BINDER~\citep{li2023binder} & 74.4 & -- & -- & -- & -- & -- \\
            & ToG+LLaMA2-70B~\citep{sun2024tog} & 68.9 & -- & -- & 57.6 & -- & -- \\
            & ToG+ChatGPT~\citep{sun2024tog}& 76.2 & -- & -- & 58.9 & -- & -- \\
            & ToG+GPT-4~\citep{sun2024tog} & 82.6 & -- & -- & \textbf{69.5} & -- & -- \\
            & RoG~\citep{luo2024rog} & \underline{85.7} & 80.0 &  70.8 & 62.6 & 57.8 & 56.2 \\ 
            \midrule
            \multirow{4}{*}{GNN+LLM} & G-Retriever~\citep{he2024gretriever}& --  & 70.1 & -- & -- & -- & -- \\
            & \grag (\textbf{Ours}) & \underline{85.7} & \underline{80.6} & 71.3 & 66.8 & \underline{61.7} & \underline{59.4}\\
            & \gragx+RA (\textbf{Ours}) & \textbf{90.7} & \textbf{82.8} & \textbf{73.5 }& \underline{68.7} & \textbf{62.8} & \textbf{60.4}\\

			\bottomrule
		\end{tabular}%
		\begin{tablenotes}
        \item Hit is used for LLM evaluation.
		\item We use the default \grag (+RA) implementation. \gragx, RoG, KD-CoT, and G-Retriever use 7B fine-tuned LLaMA2 models. KD-CoT employs ChatGPT as well.
        \end{tablenotes}
		\end{threeparttable}
}

\end{table*}%

\textbf{Competing Methods}.  We compare with SOTA GNN and LLM methods for KGQA~\citep{mavromatis2022rearev,li2023binder}. We also include earlier embedding-based methods~\citep{saxena2020improving} as well as zero-shot/few-shot LLMs~\citep{alpaca2023}. We do not compare with semantic parsing methods~\citep{yu2022decaf} as they use additional training data (SPARQL annotations), which are difficult  to obtain in practice. Furthermore, we compare \grag with LLM-based retrieval approaches~\citep{luo2024rog,sun2024tog} in terms of efficiency and effectiveness.

\section{Results}

\textbf{Main Results}.
Table~\ref{tab:main_results} presents performance results of different KGQA methods. \grag is the method that performs overall the best, achieving state-of-the-art results on the two KGQA benchmarks in almost all metrics. The results show that equipping LLMs with GNN-based retrieval boosts their reasoning ability significantly (GNN+LLM vs. KG+LLM). Specifically, \gragx+RA outperforms RoG by 5.0--6.1\% points at Hit, while it outperforms or matches ToG+GPT-4 performance, using an LLM with only 7B parameters and much fewer LLM calls -- we estimate ToG+GPT-4 has an overall cost above \$800, while \grag can be deployed on a single 24GB GPU. \gragx+RA outperforms ToG+ChatGPT by up to 14.5\% points at Hit and the best performing GNN by 5.3--9.5\% points at Hits@1 and by 0.7--10.7\% points at F1.

\captionsetup{width=\columnwidth}
\begin{table}[tb]
	\centering
 \vspace{-0.1in}
	\caption{Performance analysis (F1) on multi-hop (hops$\geq 2$) and multi-entity (entities$\geq 2$) questions.}
	\label{tab:multi}%
	\resizebox{0.5\columnwidth}{!}{
		\begin{tabular}{@{}l|cc|cc@{}}
			\toprule
            \multirow{2}{*}{Method} & \multicolumn{2}{c|}{\textbf{WebQSP}} & \multicolumn{2}{c}{\textbf{CWQ}} \\  
			 & multi-hop & multi-entity & multi-hop & multi-entity\\
            \midrule
            LLM (No RAG) & 48.4 & 61.5 & 33.7 & 32.3 \\
            RoG & 63.3 & 65.1  & 59.3 & 58.3 \\
			\midrule
            \grag &  69.8 & 82.3 & 68.2 & 64.8 \\
            \gragx+RA &  71.1 & 88.8 & 69.3 & 65.6 \\
            
			\bottomrule
		\end{tabular}%
		}
\end{table}%

\textbf{Multi-Hop \& Multi-Entity KGQA}. Table~\ref{tab:multi} compares performance results on multi-hop questions, where answers are more than one hop away from the question entities, and multi-entity questions, which have more than one question entities. \grag leverages GNNs to handle complex graph information and outperforms RoG (LLM-based retrieval) by 6.5--17.2\% points at F1 on WebQSP and by 8.5--8.9\% points at F1 on CWQ. In addition, \gragx+RA offers an additional improvement by up to 6.5\% points at F1. The results show that \grag is an effective retrieval method when deep graph search is important for successful KGQA.

\definecolor{LightCyan}{rgb}{0.9,0.95,1}
\newcolumntype{b}{>{\columncolor{LightCyan}}c}

\begin{table*}[tb]
	\centering
	\caption{Performance comparison (F1 at KGQA) of different retrieval augmentations (\Secref{sec:ret_aug}). `\#LLM Calls' are controlled by the hyperparameter $k$ (number of beams) during beam-search decoding for LLM-based retrievers,  `\#Input Tokens' denotes the median number of tokens.}
	\label{tab:retrieval}%
	\resizebox{0.9\columnwidth}{!}{
	\begin{threeparttable}
		\begin{tabular}{@{}ll|cc|b@{}}
			\toprule
			 \multirow{3}{*}{\textit{\textcolor{olive}{Retriever}}} & \multirow{3}{*}{\textit{\textcolor{blue}{KGQA Model}}} & \multicolumn{2}{c|}{\textcolor{olive}{\textit{Input/Graph Statistics}}} & \multicolumn{1}{c}{\textit{\textcolor{blue}{\textbf{KGQA Performance}}}} \\
        & & \#LLM Calls & \#Input Tokens &   \textbf{F1} (\%)\\ 
        & & & WebQSP / CWQ & WebQSP / CWQ  \\
            \midrule
            a) Dense Subgraph & \textcolor{teal}{(i)} \;GNN + SBERT (Eq.~\ref{eq:inst}) & 0 & \multirow{2}{*}{--}  & 70.9 / 47.8\\
             b) Dense Subgraph & \textcolor{violet}{(ii)} GNN + \lmsr (Eq.~\ref{eq:inst}) & 0 & &  72.8 / 49.1\\
             
            \midrule
            c) None & \multirow{4}{*}{LLaMA2-Chat-7B (tuned)} & 0 & 59 / 70 & 49.7 / 33.8\\
             d) \textcolor{brown}{(iii)} RoG (LLM-based; Eq.~\ref{eq:rog}) &  & 3 & 202 / 325  & 70.8 / 56.2\\
             
             e) \grag (\textit{default}): \textcolor{teal}{(i)} &  & 0 & 144 / 207  & 71.3 / 59.4 \\ 
             f) \gragx: \textcolor{violet}{(ii)} & & 0 & 124 / 206  & 71.5 / 58.9 \\
             \midrule
             g) \gragx+Ensemble: \textcolor{teal}{(i)} + \textcolor{violet}{(ii)} & \multirow{4}{*}{LLaMA2-Chat-7B (tuned)} & 0 &  156 / 281 &  71.7 / 57.5\\
             h) \gragx+RA (\textit{default}): \textcolor{teal}{(i)} + \textcolor{brown}{(iii)} & & 3 & 299 / 540  & \textbf{73.5} / 60.4\\
            i) \gragx+RA: \textcolor{violet}{(ii)} + \textcolor{brown}{(iii)} &  & 3 & 267 / 532  & 73.4 / \textbf{61.0} \\
            j) \gragx+All: \textcolor{teal}{(i)} + \textcolor{violet}{(ii)} + \textcolor{brown}{(iii)} &  & 3 & 330 / 668  & 72.3 / 59.1 \\

			\bottomrule
		\end{tabular}%
        
		\begin{tablenotes}
        \item For other experiments, we use the \textit{default} \grag  and \gragx+RA  implementations. The GNN used is ReaRev.
        \end{tablenotes}
		\end{threeparttable}
}

\end{table*}%

\definecolor{LightCyan}{rgb}{0.9,0.95,1}
\newcolumntype{b}{>{\columncolor{LightCyan}}c}

\textbf{Retrieval Augmentation}. Table~\ref{tab:retrieval} compares different retrieval augmentations for \grag. The primary metric is F1, while the other metrics assess how well the methods retrieve relevant information from the KG. Based on the results, we make the following conclusions: 
\begin{enumerate}
    \item  GNN-based retrieval is more efficient (\#LLM Calls, \#Input Tokens) and effective (F1) than LLM-based retrieval, especially for complex questions (CWQ); see rows (e-f)  vs. row (d). 
    \item Retrieval augmentation works the best (F1) when combining GNN-induced reasoning paths with LLM-induced reasoning paths as they fetch non-overlapping KG information (increased \#Input Tokens) that improves retrieval for KGQA; see rows (h) \& (i).
    \item Augmenting all retrieval approaches does not necessarily cause improved performance (F1) as the long input (\#Input Tokens) may confuse the LLM; see rows (g/j) vs. rows (e/h).
    \item Although the two GNNs perform differently at KGQA (F1), they both improve RAG with LLMs; see rows (a-b) vs. rows (e-f). \textit{We note though that weak GNNs are not effective retrievers (see Appendix~\ref{app:gnn-model})}.
\end{enumerate}

\begin{wraptable}{R}{0.4\textwidth} %
	\centering
    \vspace{-0.1in}
	\caption{Retrieval effect on performance (\% Hit) using various LLMs. }
	\label{tab:llms}%
	\resizebox{\linewidth}{!}{
	\begin{threeparttable}
		\begin{tabular}{lll}
			\toprule
			
			  \textbf{Method} & \textbf{WebQSP} & \textbf{CWQ}\\
            \midrule
            ChatGPT & 51.8 & 39.9 \\
            \; + ToG & 76.2 & 58.9 \\
            \; + RoG & 81.5 & 52.7 \\
            \; + \grag (+RA)& 85.3 (\textbf{87.9}) & 64.1 (\textbf{65.4}) \\
            \midrule
            Alpaca-7B & 51.8 & 27.4 \\
            \; + RoG & 73.6 & 44.0 \\
            \; + \grag (+RA)& 76.2 (\textbf{76.5}) & \textbf{54.5} (50.8)\\
            \midrule
            LLaMA2-Chat-7B & 64.4 & 34.6 \\
            \; + RoG & 84.8 & 56.4 \\
            \; + \grag (+RA)& 85.2 (\textbf{88.5}) & 62.7 (\textbf{62.9}) \\
            LLaMA2-Chat-70B & 57.4 & 39.1 \\
            \; + ToG & 68.9 & 57.6\\
            \midrule
            Flan-T5-xl & 31.0 & 14.7 \\
            \; + RoG & 67.9 & 37.8  \\
            \; + \grag (+RA)& \textbf{74.5} (72.3) & \textbf{51.0} (41.5)\\
			\bottomrule
		\end{tabular}%
		\end{threeparttable}

    }
    \vspace{-0.2in}

\end{wraptable}%

In addition, \grag improves the vanilla LLM by up to 176\% at F1 without incurring additional LLM calls; see row (c) vs. row (e). Overall,  retrieval augmentation of GNN-induced and LLM-induced paths combines their strengths and achieves the best KGQA performance.

\textbf{Retrieval Effect on LLMs}. 
Table~\ref{tab:llms} presents performance results of various LLMs using \grag or LLM-based retrievers (RoG and ToG). We report the Hit metric as it is difficult to extract the number of answers from LLM’s output. \grag (+RA) is the retrieval approach that achieves the  largest improvements for RAG. For instance, \gragx+RA improves ChatGPT by up to 6.5\% points at Hit over RoG and ToG. Moreover, \grag substantially improves the KGQA performance of weaker LLMs, such as Alpaca-7B and Flan-T5-xl. The improvement over RoG is up to 13.2\% points at Hit, while \grag outperforms LLaMA2-Chat-70B+ToG using a lightweight 7B LLaMA2 model. The results demonstrate that \grag can be integrated with other LLMs to improve their KGQA reasoning without retraining.

\textbf{Case Studies on Faithfulness}. \Figref{fig:faith} illustrates two case studies from the CWQ dataset, showing how \grag improves LLM's faithfulness, i.e., how well the LLM follows the question's instructions and uses the right information from the KG. In both cases, \grag retrieves multi-hop information, which is necessary for answering the questions correctly. In the first case, \grag retrieves both crucial facts \texttt{<Gilfoyle $\rightarrow$ characters\_that\_have\_lived\_here $\rightarrow$ Toronto>} and  \texttt{<Toronto $\rightarrow$ province.capital $\rightarrow$ Ontario>} that are required to answer the question, unlike the KG-RAG baseline (RoG) that fetches only the first fact. In the second case, the KG-RAG baseline incorrectly retrieves  information about \texttt{<Erin Brockovich $\rightarrow$ person>} and not \texttt{<Erin Brockovich $\rightarrow$ film\_character>} that the question refers to. \grag uses GNNs to explore how \texttt{<Erin Brockovich>} and \texttt{<Michael Renault Mageau>} entities are related in the KG, resulting into retrieving facts about \texttt{<Erin Brockovich $\rightarrow$ film\_character>}. The retrieved facts include important information \texttt{<films\_with\_this\_crew\_job $\rightarrow$ Consultant>}.

\Figref{fig:faith-ra} illustrates one case study from the WebQSP dataset, showing how RA (\Secref{sec:ret_aug}) improves \grag.  Initially, the GNN does not retrieve helpful information due to its limitation to understand natural language, i.e., that  \texttt{<jurisdiction.bodies>} usually ``\texttt{make the laws}''. \gragx+RA retrieves the right information, helping the LLM answer the question correctly.

Further ablation studies are provided in Appendix~\ref{app:exp-res}. Limitations are discussed in Appendix~\ref{app:limit}.

\begin{figure*}[tb]
    \centering
    \includegraphics[width=0.95\linewidth]{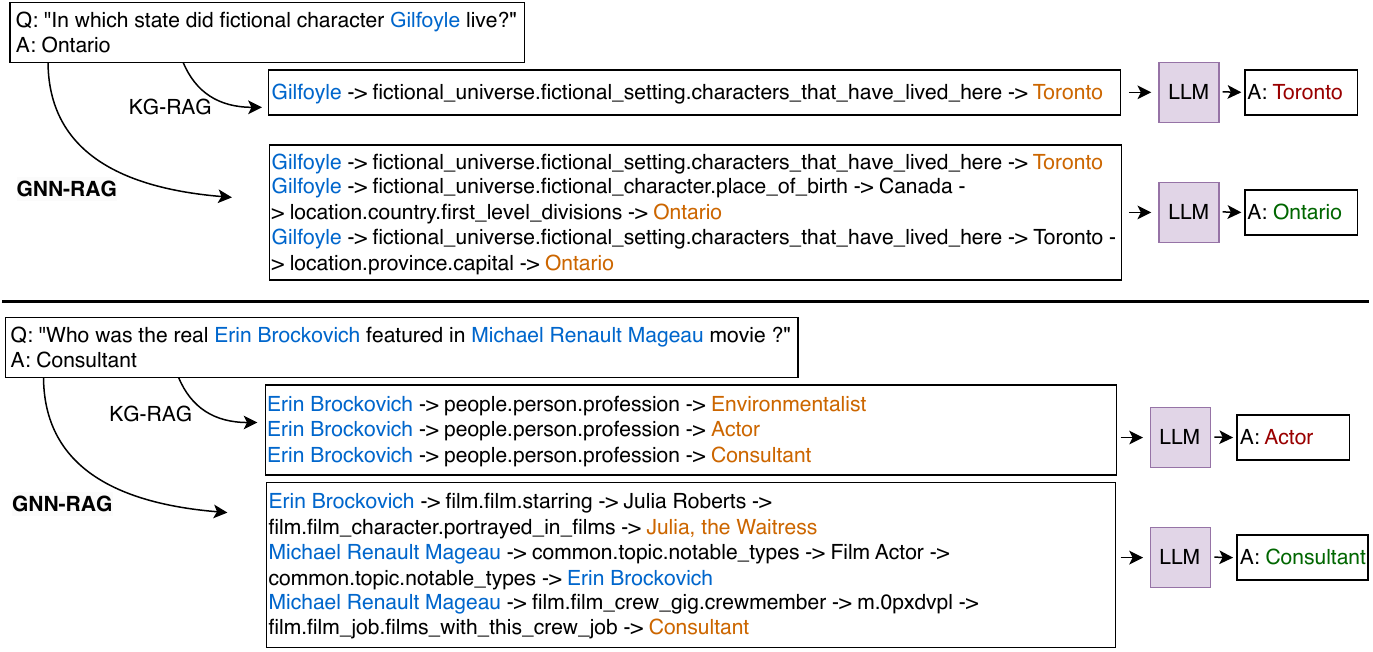}
    \caption{Two case studies that illustrate how \grag improves the LLM's faithfulness. In both cases, \grag retrieves \emph{multi-hop} information that is necessary for answering  the complex questions.}
    \label{fig:faith}
\end{figure*}

\begin{figure*}[tb]
    \centering
    \includegraphics[width=0.95\linewidth]{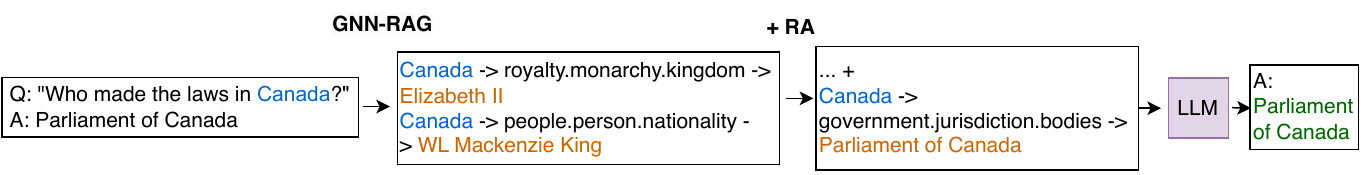}
    \caption{One case study that illustrates the benefit of retrieval augmentation (RA). RA uses LLMs to fetch semantically relevant KG information, which may have been missed by the GNN.}
    \label{fig:faith-ra}
\end{figure*}

\section{Conclusion}

We introduce \grag, a novel method for combining the reasoning abilities of LLMs and
GNNs for RAG-based KGQA. Our \textbf{contributions} are the following. (1) \textbf{Framework}: \grag repurposes GNNs for KGQA retrieval to enhance the reasoning
abilities of LLMs. Moreover, our retrieval analysis guides the design of a
retrieval augmentation technique to boost \grag performance. (2) \textbf{Effectiveness \& Faithfulness}: \grag achieves state-of-the-art performance in two
widely used KGQA benchmarks (WebQSP and CWQ). Furthermore, \grag is shown to retrieve multi-hop information that is necessary for faithful LLM reasoning on complex questions. (3) \textbf{Efficiency}: \grag improves vanilla LLMs on KGQA performance without incurring additional LLM calls as existing RAG systems for KGQA require. In addition, \grag outperforms or matches GPT-4 performance with a 7B tuned LLM.

\newpage

\bibliography{GNN-RAG}
\bibliographystyle{plainnat}

\newpage
\appendix

\textbf{\Large Appendix / supplemental material}

\section{Analysis} \label{app:analysis}

\begin{wraptable}{R}{0.4\textwidth}
\vspace{-0.2in}
	\centering
	\caption{Efficiency vs. effectiveness trade-off of LLM-based retrieval.}
	\label{tab:analysis_retrieval_app}%
	\resizebox{\linewidth}{!}{
	\begin{threeparttable}
		\begin{tabular}{@{}l|cc@{}}
			\toprule
			 \multirow{2}{*}{\textbf{Retrieval}} & 
            \#LLM Calls & Answer Hit (\%) \\
            & (efficiency) & (effectiveness) \\
            
            \midrule
            \multirow{2}{*}{RoG~\citep{luo2024rog}} & 3 & 85.7 \\
            & 1 & \textcolor{red}{77.2}\\
            \midrule
            \multirow{2}{*}{ToG~\citep{sun2024tog}} & \textcolor{red}{up to 21} & 76.2 \\
            & 3 & \textcolor{red}{66.3} \\
            \midrule 
            \grag  & \textcolor{teal}{\textbf{0}} & \textcolor{teal}{\textbf{87.2}} \\
			
			\bottomrule
		\end{tabular}%
		\begin{tablenotes}
        \item \#LLM Calls are controlled by the hyperparameter $k$ (number of beams) during beam-search decoding.
        \end{tablenotes}
		\end{threeparttable}
}

\vspace{-0.2in}
\end{wraptable}%

In this section, we analyze the reasoning and retrieval abilities of GNN and LLMs, respectively.

\begin{definition}[Ground-truth Subgraph] \label{def:1}
Given a question $q$, we define its ground-truth reasoning subgraph $\gG_q^\ast$ as the union of the ground-truth reasoning paths that lead to the correct answers $\{a\}$. Reasoning paths are defined as the KG paths that reach the answer nodes, starting from the question entities $\{e\}$, e.g., \texttt{<Jamaica $\rightarrow$ language\_spoken $\rightarrow$ English>} for question ``\texttt{Which language do Jamaican people speak?}''. In essence, $\gG_q^\ast$ contains only the necessary entities and relations that are needed to answer $q$. 
\end{definition}

\begin{definition}[Effective Reasoning]\label{def:1a}
We define that a model $M$ \emph{reasons effectively}  if its output is $\{a\} = M(\gG_q^\ast, q)$, i.e., the model returns the correct answers given the ground-truth subgraph $\gG_q^\ast$.
\end{definition}

As KGQA methods do not use the ground-truth subgraph $\gG_q^\ast$ for reasoning, but the retrieved subgraph $\gG_q$, we identify two cases in which the reasoning model \emph{cannot} reason effectively, i.e., $\{a\} \neq M(\gG_q, q)$.

\underline{Case 1}: $\gG_q \subset \gG_q^\ast $, i.e., the retrieved subgraph $\gG_q$ does not contain all the necessary information for answering $q$. An application of this case is when we use LLMs for retrieval. As LLMs are not designed to handle complex graph information, the retrieved subgraph $\gG_q$ may contain incomplete KG information. Existing LLM-based methods rely on employing an increased number of LLM calls (beam search decoding) to fetch diverse reasoning paths that approximate $\gG_q^\ast$. Table~\ref{tab:analysis_retrieval_app} provides experimental evidence  that shows how LLM-based retrieval trades computational efficiency for effectiveness. \emph{In particular, when we switch from beam-search decoding to greedy decoding for faster LLM retrieval, the KGQA performance drops by  8.3--9.9\% points at answer hit}.

\underline{Case 2}: $\gG_q^\ast  \subset \gG_q$ and model $M$ cannot “filter-out” irrelevant facts during reasoning. An application of this case is when we use GNNs for reasoning. GNNs cannot understand the textual semantics of KGs and natural questions the same way as LLMs do, and they reason ineffectively if they cannot tell the irrelevant KG information. We develop the following Theorem that supports this case for GNNs.

\begin{theorem}[Simplified]
\label{thm:1}
Under mild assumptions and due to the sum operator of GNNs in \Eqref{eq:gnn-th}, a GNN can reason effectively  by selecting question-relevant facts and filtering-out question-irrelevant facts through $\omega(q, r)$. 
\end{theorem}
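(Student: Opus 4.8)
The plan is to turn the informal claim into a statement about the GNN's \emph{capacity}: I would exhibit a configuration of the matching function $\omega(q,r)$ under which running $L$ updates of \Eqref{eq:gnn-th} on the retrieved subgraph $\gG_q$ produces exactly the same answers as running them on the ground-truth subgraph $\gG_q^\ast$, so that effective reasoning in the sense of Definition~\ref{def:1a} transfers from $\gG_q^\ast$ to $\gG_q$. Concretely I would pin down the ``mild assumptions'' as: (i) letting $R^\ast$ be the set of relations occurring in $\gG_q^\ast$, every fact in $\gG_q \setminus \gG_q^\ast$ uses a relation outside $R^\ast$ (distractor facts are identifiable by relation type); and (ii) a realizability condition, namely that the matching network $\phi(\vq^{(k)} \odot \vr)$ of \Eqref{eq:inst} is expressive enough to realize $\omega(q,r)=1$ for $r \in R^\ast$ and $\omega(q,r)=0$ for $r \notin R^\ast$. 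I would then take as hypothesis that the GNN already reasons effectively on $\gG_q^\ast$, i.e.\ $L$ updates there return the correct set $\{a\}$, and aim to prove $M(\gG_q,q)=M(\gG_q^\ast,q)=\{a\}$.

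The central observation is that the sum operator annihilates the distractor facts. For any node $v$ lying in $\gG_q^\ast$, the aggregated message computed over its neighborhood in $\gG_q$ satisfies
\[
\sum_{v' \in \mathcal{N}_v^{\gG_q}} \omega(q,r)\cdot \vm_{vv'}^{(l)} = \sum_{v' \in \mathcal{N}_v^{\gG_q^\ast}} \omega(q,r)\cdot \vm_{vv'}^{(l)},
\]
because every neighbor reached through a relation $r \notin R^\ast$ contributes a term weighted by $\omega(q,r)=0$, and by assumption (i) all such neighbors lie outside $\gG_q^\ast$. Thus the extra mass introduced by the retrieved-but-irrelevant facts is exactly cancelled, and the neighborhood aggregation on $\gG_q$ reduces to the one on $\gG_q^\ast$. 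This is the step that uses the specific algebraic structure of \Eqref{eq:gnn-th}: a sum with zero-valued weights is insensitive to the cancelled summands, whereas a max- or concatenation-based aggregator would not enjoy this exact invariance.

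I would then close the argument by induction on the layer index $l=1,\dots,L$. The base case is immediate, since the initial question-conditioned encodings $\vh_v^{(0)}$ are identical on both graphs. For the inductive step, assuming the representations $\vh_{v'}^{(l-1)}$ agree on $\gG_q$ and $\gG_q^\ast$ for all $v' \in \gG_q^\ast$, the message identity above shows the aggregate fed into $\psi(\cdot)$ coincides, hence $\vh_v^{(l)}$ agrees as well. After $L$ layers the final representations $\vh_v^{(L)}$ match for every answer-candidate node, and since the $\softmax$ classification head acts only on these representations, the returned answer set on $\gG_q$ equals that on $\gG_q^\ast$, namely $\{a\}$, which is the claim.

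The main obstacle, where the word ``mild'' carries the real weight, is the realizability assumption (ii): one must justify that the particular form $\omega(q,r)=\phi(\vq^{(k)} \odot \vr)$ can be driven to zero on irrelevant relations without collapsing the relevant ones. I would handle this by appealing to $\phi$ being a universal-approximator network, so that a single separating direction in the $\vq^{(k)} \odot \vr$ space suffices, and I would obtain the clean exact statement by permitting a hard threshold (or a temperature-sharpened nonlinearity) on $\phi$'s output. A secondary subtlety is assumption (i): if some distractor fact shares a relation in $R^\ast$, exact cancellation fails, and one only recovers an approximate result. For that case I would instead state an $\epsilon$-robust version, bounding $\|\vh_v^{(L)}[\gG_q]-\vh_v^{(L)}[\gG_q^\ast]\|$ by the residual weight $\sum_{r \notin R^\ast}\omega(q,r)$ propagated through the Lipschitz constants of $\psi$ and the message maps, so that the returned answers remain unchanged once this residual is small enough.
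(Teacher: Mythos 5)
Your central step is the same as the paper's: with a binary $\omega(q,r)$, the sum aggregation over $\mathcal{N}_v$ in $\gG_q$ collapses exactly to the sum over $\gN_v^\ast$, and a layer-wise induction transfers the representations of the nodes of $\gG_q^\ast$ from one graph to the other (the paper packages this as Lemma~\ref{lemma} plus ``Case 2'' of the proof of Theorem~\ref{thma-1}). However, your argument only controls nodes that lie in $\gG_q^\ast$, and that is not enough to conclude $M(\gG_q,q)=\{a\}$. In this setting the GNN scores \emph{every} node of the retrieved subgraph as answer vs.\ non-answer (softmax over all of $\gG_q$), so the extra nodes $u \in \gG_q \setminus \gG_q^\ast$ are also answer candidates. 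Under \Eqref{eq:gnn-th}, such a node indeed receives zero aggregated message at every layer, but its representation evolves as $\vh_u^{(l)} = \psi\big(\vh_u^{(l-1)}, 0^d\big)$, which is in general \emph{nonzero}: nothing in your proof prevents the classification head from scoring $\vh_u^{(L)}$ above threshold, in which case the answer set returned on $\gG_q$ strictly contains the one returned on $\gG_q^\ast$. Your closing sentence (``the returned answer set on $\gG_q$ equals that on $\gG_q^\ast$'') therefore does not follow from the induction; the induction yields agreement on $\gG_q^\ast$-nodes only.

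The paper closes precisely this hole with an additional assumption and a dedicated case that your proposal is missing: it assumes $\psi\big(\vh_v^{(0)}, 0^d\big) = 0^d$ and stipulates in Definition~\ref{def:2} that zero node representations do not contribute to the reasoning process; its ``Case 1'' then shows $\vh_u^{(l)} = 0^d$ for every $u \notin \gG_q^\ast$ and every layer $l$, so these nodes vanish from the reasoning process and Lemma~\ref{lemma} can be applied to the remaining nodes. You need either this assumption or a substitute (e.g., requiring the readout to assign zero answer probability to nodes whose representation is $0^d$, or to nodes with no $\omega$-relevant incident fact). Your other deviations are fine and in places more careful than the paper: making explicit that irrelevant facts must be separable by relation type is an honest version of what the paper quietly assumes when it treats $\omega(q,r)$ as a per-fact indicator, and the realizability and $\epsilon$-robustness remarks go beyond the paper (which simply postulates the binary $\omega$ and never addresses whether $\phi(\vq^{(k)} \odot \vr)$ can realize it). But as written, the untreated out-of-$\gG_q^\ast$ nodes are a genuine gap, not a stylistic difference.
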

We provide the full theorem and its proof in Appendix~\ref{app:thm}. Theorem~\ref{thm:1} suggests that GNNs need to perform semantic matching via function
$\omega(q, r)$
apart from leveraging the graph information encoded in the KG. Our analysis suggests that  GNNs lack reasoning abilities for KGQA if they cannot perform effective semantic matching between the KG and the question.

\section{Full Theorem \& Proof} \label{app:thm}

To analyze under which conditions GNN perform well for KGQA, we  use the ground-truth subgraph $\mathcal{G}_q^\ast$ for a question $q$, as defined in Definition~\ref{def:1}. We  compare the output representations of a GNN over the ground-truth $\mathcal{G}_q^\ast$ and another $\mathcal{G}_q$ to measure how close the two outputs are.

We always assume $\gG_q^\ast \subseteq \gG_q$ for a question $q$. 1-hop facts that contain $v$ are denoted as $\gN_v^\ast$.

\begin{definition} \label{def:2}
Let $M$ be a GNN model for answering question $q$ over a KG $\mathcal{G}_q$, where the output is computed by $M(q,\gG_q)$. $M$ consists of $L$ reasoning steps (GNN layers).  We assume $M$ is an effective reasoner, according to Definition~\ref{def:1a}. Furthermore, we define the reasoning process $\gR_{M, q, \gG_q}$  as the sequence of the derived node representations at each step $l$, i.e.,  
\begin{equation}
    \gR_{M, q, \gG_q} = \Big\{ \{\vh_v^{(1)}: v \in \gG_q\}, \dots, \{\vh_v^{(L)}: v \in \gG_q\} \Big\}.
    \label{eq:resonproc}
\end{equation}
We also define the optimal reasoning process for answering question $q$ with GNN $M$ as $\gR_{M, q, \gG_q^\ast}$.
We assume that zero node representations do not contribute in \Eqref{eq:resonproc}. 
\end{definition}

\begin{lemma} \label{lemma}
If two subgraphs $\gG_1$ and $\gG_2$ have the same nodes, and a GNN  outputs the same node representations for all nodes $v \in \gG_1$ and $v \in \gG_2$ at each step $l$, then the reasoning processes $\gR_{M, q, \gG_1}$ and $\gR_{M,  q, \gG_2}$ are identical. 
\end{lemma}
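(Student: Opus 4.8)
The plan is to prove this Lemma directly by unpacking the definition of the reasoning process from \Eqref{eq:resonproc} and matching it term-by-term. The key observation is that $\gR_{M, q, \gG}$ is defined purely as the ordered collection of the per-step sets of node representations $\{\vh_v^{(l)} : v \in \gG\}$ for $l = 1, \dots, L$; it carries no structural information about the edges of $\gG$ beyond what is already reflected in these representations. Hence, to show that two reasoning processes are identical, it suffices to show that their corresponding per-step sets coincide for every $l$.

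First I would fix an arbitrary layer index $l \in \{1, \dots, L\}$ and compare the sets $\{\vh_v^{(l)} : v \in \gG_1\}$ and $\{\vh_v^{(l)} : v \in \gG_2\}$. By the first hypothesis, $\gG_1$ and $\gG_2$ have the same node set, so the index set over which each collection is formed is identical; call it $V$. The second hypothesis states that the GNN produces the same representation for each node $v$ at step $l$ whether it is run on $\gG_1$ or on $\gG_2$. Writing $\vh_v^{(l)}$ for this common value, both sets therefore equal $\{\vh_v^{(l)} : v \in V\}$, and so the two per-step sets agree.

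Since the layer $l$ was arbitrary, this equality holds at every step $1, \dots, L$. The two reasoning processes are then sequences of length $L$ whose corresponding entries coincide, giving $\gR_{M, q, \gG_1} = \gR_{M, q, \gG_2}$, as required. The only point requiring a moment's care is the convention from Definition~\ref{def:2} that zero node representations do not contribute: because the per-node representations agree across the two subgraphs, any node discarded under this convention in one process is discarded in the other, so the convention preserves the term-by-term matching rather than breaking it. I do not anticipate a genuine obstacle here—the statement is essentially a bookkeeping consequence of the definition, and the main care is simply to track that ``same nodes'' together with ``same representations at each step'' forces the defining sets to be equal. This Lemma will then serve as the technical substitution principle that lets the subsequent argument replace reasoning over $\gG_q$ by reasoning over $\gG_q^\ast$ whenever their node representations coincide.
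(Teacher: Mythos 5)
Your proof is correct and takes essentially the same route as the paper's: the lemma is an immediate definitional consequence of \Eqref{eq:resonproc}, since equal node sets together with equal per-node representations at every layer force the per-step collections, and hence the sequences, to coincide. The paper's own justification is a one-line version of this same unpacking, likewise emphasizing (as you do) that no assumption about the edges of $\gG_1$ and $\gG_2$ is needed.
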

\noindent
This is true as $\vh_v^{(l)}$ with $l = 1, \dots, L$ for both $\gG_1$ and $\gG_2$ and by using Definition~\ref{def:2} to show $\gR_{M, q, \gG_1} = \gR_{M,  q, \gG_2}$. Note that Lemma~\ref{lemma} does not make any assumptions about the actual edges of $\gG_1$ and $\gG_2$.

To analyze the importance of semantic matching for GNNs, we consider the following GNN update
\begin{equation}
    \vh_v^{(l)}= \psi \Big(\vh_v^{(l-1)}, \sum_{v' \in \mathcal{N}_v } \omega(q, r) \cdot \vm_{vv'}^{(l)} \Big).
    \label{eq:gnn-th-a}
\end{equation}
where $\omega(\cdot, \cdot): \sR^d \times \sR^d \xrightarrow[]{} \{0,1\}$ is a binary function that decides if fact $(v,r,v')$ is relevant to question $q$ or not. Neighbor messages $\vm_{vv'}^{(l)}$ are aggregated by a sum-operator, which is typically employed in GNNs. Function $\psi(\cdot)$ combines representations among consecutive GNN layers. We assume $\vh_v^{(0)} \in \sR^d$ and that $\psi \Big(h_v^{(0)}, 0^d \Big) = 0^d$

\begin{theorem}\label{thma-1}
If $\omega(q, r)=0 , \forall (v,r,v') \notin \gG_q^\ast$ and  $\omega(q, r) = 1 , \forall (v,r,v') \in \gG_q^\ast$, then  $\gR_{M, q, \gG_q}$ is an optimal reasoning process of GNN $M$ for answering $q$. 
\end{theorem}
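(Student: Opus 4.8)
The plan is to show that under the stated conditions on $\omega(q,r)$, the reasoning process $\gR_{M, q, \gG_q}$ coincides exactly with the optimal reasoning process $\gR_{M, q, \gG_q^\ast}$, which is optimal by the assumption that $M$ is an effective reasoner (Definition~\ref{def:1a}). The natural route is to invoke Lemma~\ref{lemma}: if I can show that the GNN produces identical node representations $\vh_v^{(l)}$ over $\gG_q$ and over $\gG_q^\ast$ at every step $l$ and for every node $v$, then the two reasoning processes are identical, and hence $\gR_{M, q, \gG_q}$ inherits optimality from $\gR_{M, q, \gG_q^\ast}$. So the entire argument reduces to a representation-equality claim, which I would establish by induction on the layer index $l$.

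First I would set up the induction. The base case $l=0$ is immediate since the initial representations $\vh_v^{(0)}$ depend only on the node (and the question), not on the surrounding edges, so they agree across the two subgraphs. For the inductive step, fix a node $v$ and assume $\vh_{v'}^{(l-1)}$ agrees over $\gG_q$ and $\gG_q^\ast$ for all relevant $v'$. I would then expand the update~\eqref{eq:gnn-th-a} and split the neighbor sum $\sum_{v' \in \gN_v}$ into facts lying inside $\gG_q^\ast$ and facts lying outside. By the hypothesis $\omega(q,r)=0$ for every $(v,r,v')\notin \gG_q^\ast$, the outside terms are annihilated, so the aggregated message over $\gG_q$ equals the aggregated message taken only over $\gN_v^\ast$, i.e., exactly the aggregation performed over $\gG_q^\ast$ (where $\omega(q,r)=1$ keeps all the ground-truth facts). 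Feeding the identical aggregated messages and the identical previous-layer representations through $\psi$ yields $\vh_v^{(l)}$ equal over both subgraphs, closing the induction.

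There is one bookkeeping subtlety I would handle carefully: a node $v$ that appears in $\gG_q$ but has no incident ground-truth facts (i.e. $\gN_v^\ast = \emptyset$). For such a node every neighbor term is killed by $\omega$, so the aggregated message is $0^d$, and by the stated assumption $\psi(\vh_v^{(0)}, 0^d) = 0^d$ its representation collapses to zero; by the convention in Definition~\ref{def:2} that zero representations do not contribute to the reasoning process, these spurious nodes drop out and do not disturb the comparison. This lets me legitimately apply Lemma~\ref{lemma}, whose hypothesis only requires matching representations on the shared nodes rather than matching edge sets. The main obstacle is therefore not any deep computation but getting this zero-representation convention and the node-set alignment between $\gG_q$ and $\gG_q^\ast$ precisely right, so that the appeal to Lemma~\ref{lemma} is airtight; once the inductive representation-equality is in hand, optimality transfers immediately from the effective-reasoner assumption.
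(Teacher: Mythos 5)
Your proposal is correct and follows essentially the same route as the paper's proof: split the neighbor aggregation using the two conditions on $\omega(q,r)$ so that facts outside $\gG_q^\ast$ are annihilated and facts inside are preserved, handle nodes of $\gG_q$ with no incident ground-truth facts via the $\psi(\vh_v^{(0)}, 0^d) = 0^d$ assumption and the convention that zero representations do not contribute, and then conclude $\gR_{M,q,\gG_q} = \gR_{M,q,\gG_q^\ast}$ through Lemma~\ref{lemma}. The only difference is presentational: you package the argument as an explicit induction on the layer index $l$, whereas the paper organizes it as two cases (spurious nodes and spurious edges) and leaves the layer-by-layer propagation implicit in the appeal to the lemma; your version makes that propagation step slightly more airtight but is the same proof.
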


\begin{proof}
 We show that 
\begin{equation}
    \sum_{v' \in \mathcal{N}_v } \omega(q, r) \cdot \vm_{vv'}^{(l)}  = \sum_{v' \in \gN^\ast_v }  \vm_{vv'}^{(l)},
\end{equation}
which gives that $\gR_{M, q, \gG_q} = \gR_{M, q, \gG_q^\ast}$ via Lemma~\ref{lemma}.
This is true if 
\begin{equation}
    \omega(q, r) =
    \begin{cases}
      1 & \text{if $(v, r, v') \in \gN^\ast_v$,}\\
      0 & \text{if $(v, r, v') \notin \gN^\ast_v$,}
    \end{cases}       
\end{equation}
which means that GNNs need to filter-out question irrelevant facts. 
We consider two cases.

\textbf{Case 1}. Let $u$ denote a node that is present in $\gG_q$, but not in $\gG^\ast_q$. Then, all facts that contain $u$ are not present in $\gG^\ast_q$. Condition $\omega(q, r)=0 , \forall (v,r,v') \notin \gG_q^\ast$ of Theorem~\ref{thma-1} gives that 
\begin{align}
    \omega(q,r) &= 0, \forall (u,r,v'), \text{ and } \nonumber \\
    \omega(q,r) &= 0, \forall (v,r,u).
\end{align}
as node $u \notin \gG_q^\ast$. 
This gives
\begin{equation}
    \sum_{v' \in \mathcal{N}(u) } \omega(q, r) \cdot \; \vm_{uv'}^{(l)} = 0,
\end{equation}
as no edges will contribute to the GNN update. With $\psi \Big(h_v^{(0)}, 0^d \Big) = 0^d$, we have 
\begin{equation}
    h_u^{(l)} = 0^d, \forall u \notin \gG^\ast_q \text{ with } l=\{1, \dots, L\},
\end{equation}
which means that nodes $u \notin \gG^\ast_q $ do not contribute to the reasoning process $\gR_{M, q, \gG_q}$; see Definition~\ref{def:2}.

\textbf{Case 2}. Let $p$ denote a relation between two nodes $v$ and $v'$ that is present in $\gG_q$,  but not in $\gG^\ast_q$. We decompose the GNN update to 
\begin{equation}
    \sum_{v' \in \mathcal{N}_r(v) } \omega(q, r) \cdot \; \vm_{vv'}^{(l)} + \sum_{v' \in \mathcal{N}_p(v) } \omega(q, p) \cdot \; \vm_{vv'}^{(l)},
\end{equation}
where the first term includes facts $\mathcal{N}_r$ that are present in $\gG_q^\ast$ and the second term includes facts $\mathcal{N}_p$ that are present in $\gG_q$ only. Using the condition $\omega(q, r)=0 , \forall (v,r,v') \notin \gG_q^\ast$ of Theorem~\ref{thma-1}, we have 
\begin{equation}
    \sum_{v' \in \mathcal{N}_p(v) } \omega(q, p) \cdot \; \vm_{vv'}^{(l)} =0.
\end{equation}
Using condition $\omega(q, r) = 1 , \forall (v,r,v') \in \gG_q^\ast$, we have 
\begin{equation}
 \sum_{v' \in \mathcal{N}_r(v) } \omega(q, r) \cdot \; \vm_{vv'}^{(l)} = \sum_{v' \in \mathcal{N}_r(v) } \vm_{vv'}^{(l)}.
\end{equation}
Combining the two above expression gives 
\begin{equation}
    \sum_{v' \in \mathcal{N}_v } \omega(q, r) \cdot \vm_{vv'}^{(l)}  = \sum_{v' \in \mathcal{N}_r(v) } \vm_{vv'}^{(l)} = \sum_{v' \in \gN^\ast_v }  \vm_{vv'}^{(l)}.
\end{equation}
It is straightforward to obtain $\gR_{M, q, \gG_q} = \gR_{M, q, \gG_q^\ast}$ via Lemma~\ref{lemma} in this case.

\textbf{Putting it altogether}. Combining Case 1 and Case 2,nodes $u \notin \gG_q^\ast$ do not contribute to  $\gR_{M, q, \gG_q}$, while for other nodes we have $\gR_{M, q, \gG_q} = \gR_{M, q, \gG_q^\ast}$. Thus, overall we have $\gR_{M, q, \gG_q} = \gR_{M, q, \gG_q^\ast}$.
\end{proof}

\section{Experimental Setup} \label{app:exp}

\textbf{KGQA Datasets}.
We experiment with two widely used KGQA benchmarks: WebQuestionsSP (WebQSP)~\cite{yih2015semantic}, 
Complex WebQuestions 1.1 (CWQ)~\cite{talmor2018web}.
We also experiment with MetaQA-3~\cite{zhang2018variational} dataset.
We provide the  dataset statistics Table~\ref{tab:data}.
\textbf{WebQSP} contains 4,737 natural language questions that are answerable using a subset Freebase KG~\citep{bollacker2008freebase}. This  KG contains 164.6 million facts and 24.9 million entities. The questions require up to 2-hop reasoning within this KG. Specifically, the model needs to aggregate over two KG facts for 30\% of the questions, to reason over constraints for 7\% of the questions, and to use a single KG fact for the rest of the questions.
\noindent
\textbf{CWQ} is generated from WebQSP by extending the question entities or
adding constraints to answers, in order to construct  more complex multi-hop questions (34,689 in total). There are
four types of questions: composition (45\%), conjunction (45\%), comparative (5\%), and superlative
(5\%). The questions require up to 4-hops of reasoning over the KG, which is the same KG as in WebQSP. \textbf{MetaQA-3} consists of more than 100k 3-hop questions in the domain of movies. The questions were constructed using the KG provided by the WikiMovies~\cite{miller2016key} dataset, with about 43k entities and 135k triples. For MetaQA-3, we use 1,000 (1\%) of the training questions.

\captionsetup{width=0.7\columnwidth}
\begin{table}[tb]
	\centering
	\caption{Datasets statistics. ``avg.$|\gV_q|$'' denotes average number of entities in subgraph, and “coverage” denotes the ratio of at least one answer in subgraph.}
	\label{tab:data}%
	\resizebox{0.7\columnwidth}{!}{
		\begin{tabular}{@{}l|rrr|rr@{}}
			\toprule
			Datasets & Train & Dev  & Test & avg. $|\gV_q|$ & coverage (\%) \\
			
			\midrule
			WebQSP & 2,848 & 250 & 1,639 & 1,429.8 & 94.9\\
			CWQ &  27,639 & 3,519 & 3,531 & 1,305.8 & 79.3 \\
			MetaQA-3 &  114,196 & 14,274 & 14,274 & 497.9 & 99.0 \\
			\bottomrule
		\end{tabular}%
		}
\end{table}%

\textbf{Implementation}. 
For subgraph retrieval, we use the linked entities to the KG provided by~\cite{yih2015semantic} for WebQSP, by~\cite{talmor2018web} for CWQ. We obtain dense subgraphs by ~\cite{he2021improving}. It runs the PageRank Nibble~\cite{andersen2006local} (PRN) method starting from the linked entities to select the top-$m$ ($m=2,000$) entities to be included in the subgraph.

We employ ReaRev\footnote{https://github.com/cmavro/ReaRev\_KGQA}~\citep{mavromatis2022rearev} for GNN reasoning (\Secref{sec:gnn}) and RoG\footnote{https://github.com/RManLuo/reasoning-on-graphs}~\citep{luo2024rog} for RAG-based prompt tuning (\Secref{sec:llm}), following their official implementation codes. 
In addition, we empower ReaRev with \lmsr (\Secref{sec:gnn}), which is obtained by following the implementation of SR\footnote{https://github.com/RUCKBReasoning/SubgraphRetrievalKBQA}~\citep{zhang2022subgraph}. For both training and inference of these methods, we use their suggested hyperparameters, without performing further hyperparameter search. Model selection is performed based on the validation data. Experiments with GNNs were performed on a  Nvidia Geforce
RTX-3090 GPU over 128GB RAM
machine. Experiments with LLMs were performed on 4 A100 GPUs connected via NVLink and 512 GB of memory. The experiments are implemented with PyTorch. 

For LLM prompting during retrieval (\Secref{sec:ret_aug}), we use the following prompt:
\begin{center}
\fbox{\parbox{5.5in}{\raggedright
\texttt{Please generate a valid relation path that can be helpful for answering the following question: \\ \{Question\}}}}
\end{center}

For LLM prompting during reasoning (\Secref{sec:llm}), we use the following prompt:
\begin{center}
\fbox{\parbox{5.5in}{\raggedright
\texttt{Based on the reasoning paths, please answer the given question. Please keep the answer as simple as possible and return all the possible answers as a list.\textbackslash n \\
Reasoning Paths: \{Reasoning Paths\} \textbackslash n \\
Question: \{Question\}}}}
\end{center}

During GNN inference, each node in the subgraph is assigned a probability of being the correct answer, which is normalized via $\softmax$. To retrieve answer candidates, we sort the nodes based on the their probability scores, and select the top nodes whose cumulative probability score is below a threshold. We set the threshold to 0.95. To retrieve the shortest paths between the question entities and answer candidates for RAG, we use the NetworkX library\footnote{https://networkx.org/}.

\textbf{Competing Approaches}. 

We evaluate the following categories of methods: 1. Embedding, 2. GNN, 3. LLM, 4. KG+LMM, and 5. GNN+LLM.
\begin{enumerate}
    \item KV-Mem~\cite{miller2016key} is a key-value memory network  for KGQA. EmbedKGQA~\cite{saxena2020improving} utilizes KG pre-trained embeddings~\cite{trouillon2016complex} to improve multi-hop reasoning. TransferNet~\cite{shi2021transfernet} improves multi-hop reasoning over the relation set. Rigel~\cite{sen2021expanding} improves reasoning with questions of multiple entities.
    \item GraftNet~\cite{sun-etal-2018-open} uses a convolution-based GNN~\cite{kipf2016semi}. PullNet~\cite{sun2019pullnet} is built on top of GraftNet, but learns which nodes to retrieve via selecting shortest paths to the answers. NSM~\cite{he2021improving} is the adaptation of GNNs  for KGQA. NSM+h~\cite{he2021improving} improves NSM for multi-hop reasoning. SQALER~\cite{atzeni2021sqaler} learns which relations (facts) to retrieve during KGQA for GNN reasoning. Similarly, SR+NSM~\citep{zhang2022subgraph} proposes a relation-path retrieval. UniKGQA~\citep{jiang2023unikgqa} unifies the graph retrieval and reasoning process with a single LM. ReaRev~\citep{mavromatis2022rearev} explores diverse reasoning paths in a multi-stage manner.
    \item We experiment with instruction-tuned LLMs. Flan-T5~\citep{chung2024scaling} is based on T5, while Aplaca~\citep{alpaca2023} and LLaMA2-Chat~\citep{touvron2023llama} are based on LLaMA. ChatGPT\footnote{https://openai.com/blog/chatgpt} is a powerful closed-source LLM that excels in many complex tasks. ChatGPT+CoT uses the chain-of-thought~\citep{wei2022chain} prompt to improve the ChatGPT. We access ChatGPT \texttt{`gpt-3.5-turbo'} through its API (as of May 2024).
    \item KD-CoT~\citep{wang2023knowledge} enhances CoT prompting for LLMs with relevant knowledge from KGs. StructGPT~\citep{jiang2023structgpt} retrieves KG facts for RAG. KB-BINDER~\citep{li2023binder} enhances LLM reasoning by generating logical forms of the questions. ToG~\citep{sun2024tog} uses a powerful LLM to select  relevant facts hop-by-hop. RoG~\citep{luo2024rog} uses the LLM to generate relation paths for better planning.
    \item G-Retriever~\citep{he2024gretriever} augments LLMs with GNN-based prompt tuning.
\end{enumerate}

\section{Additional Experimental Results} \label{app:exp-res}

\subsection{Question Analysis} \label{app:question}

\captionsetup{width=\columnwidth}
\begin{table}[h]
	\centering
	\caption{Performance analysis (F1) based on the number of maximum hops that connect question entities to answer entities.}
	\label{tab:hop}%
	\resizebox{0.7\columnwidth}{!}{
		\begin{tabular}{@{}l|ccc|ccc@{}}
			\toprule
            \multirow{2}{*}{Method} & \multicolumn{3}{c|}{\textbf{WebQSP}} & \multicolumn{3}{c}{\textbf{CWQ}} \\  
			 & 1 hop & 2 hop & $\geq$3 hop & 1 hop & 2 hop & $\geq$3 hop  \\
            \midrule
            RoG & 73.4 & 63.3 & -- & 50.4 & 60.7 & 40.0\\
			\midrule
            \grag & 72.0 & 69.8 & -- & 47.4 & 69.4  & 51.8\\
            \grag+RA & 74.6 & 71.1 & -- & 48.2 & 70.9 &  47.7\\
            
			\bottomrule
		\end{tabular}%
		}
\end{table}%

Following the case studies presented in \Figref{fig:faith} and \Figref{fig:faith-ra}, we provide numerical results on how \grag improves multi-hop question answering and how retrieval augmentation (RA) enhances simple hop questions. Table~\ref{tab:hop} summarizes these results. \grag improves performance on multi-hop questions ($\geq$2 hops) by 6.5--11.8\% F1 points over RoG. Furthermore, RA improves performance on single-hop questions by 0.8--2.6\% F1 points over \grag. 

\captionsetup{width=\columnwidth}
\begin{table}[h]
	\centering
	\caption{Performance analysis (F1) based on the number of answers (\#Ans).}
	\label{tab:answer_num}%
	\resizebox{\columnwidth}{!}{
		\begin{tabular}{@{}l|cccc|cccc@{}}
			\toprule
            \multirow{2}{*}{Method} & \multicolumn{4}{c|}{\textbf{WebQSP}} & \multicolumn{4}{c}{\textbf{CWQ}} \\ \cmidrule(l){2-5 } \cmidrule(l){6-9 } 
			 & \#Ans=1 & 2$\leq$\#Ans$\leq$4 & 5$\leq$\#Ans$\leq$9 & \#Ans$\geq$10  & \#Ans=1 & 2$\leq$\#Ans$\leq$4 & 5$\leq$\#Ans$\leq$9 & \#Ans$\geq$10  \\
            \midrule
            RoG & 67.89 & 79.39 & 75.04 & 58.33 & 56.90 & 53.73 &  58.36 &  43.62 \\
			\midrule
            \grag & 71.24 & 76.30 & 74.06 & 56.28 & 60.40 & 55.52 & 61.49 & 50.08\\
            \grag+RA & 71.16 & 82.31 & 77.78 & 57.71 & 62.09 & 56.47 & 62.87 & 50.33\\
            
			\bottomrule
		\end{tabular}%
		}
\end{table}%

Table~\ref{tab:answer_num} presents results with respect to the number of correct answers. As shown, RA enhances \grag in almost all cases as it can fetch correct answers that might have been missed by the GNN.

\subsection{GNN Effect}\label{app:gnn-model}

\begin{table*}[h]
	\centering
	\caption{Performance comparison of different GNN models at complex KGQA (CWQ).}
	\label{tab:gnn_model}%
	\resizebox{0.6\columnwidth}{!}{
	\begin{threeparttable}
		\begin{tabular}{@{}ll|ccc@{}}
			\toprule
			 \multirow{2}{*}{\textbf{\textcolor{olive}{Retriever}}} & \multirow{2}{*}{\textbf{\textcolor{blue}{KGQA Model}}} &  \multicolumn{3}{c}{\textbf{CWQ}} \\  \cmidrule(l){3-5} 
		      & & Hit$^\ast$ & H@1 & F1  \\
            \midrule
            
            Dense Subgraph &  GraftNet  &  -- & 45.3 & 35.8  \\
            Dense Subgraph &  NSM  & -- & 47.9  & 42.0  \\
            Dense Subgraph &  ReaRev  & -- & 52.7 & 49.1  \\
            \midrule
            RoG & \multirow{4}{*}{LLaMA2-Chat-7B (tuned)} & 62.6 & 57.8  & 56.2 \\
            \grag: GraftNet &   & 58.2 & 51.9 & 49.4\\
            \grag: NSM & & 58.5 & 52.5 & 50.1 \\
            \grag: ReaRev &  & 66.8 & 61.7 & 59.4\\
			\bottomrule
		\end{tabular}%
        
		\end{threeparttable}
}

\end{table*}%

\begin{wraptable}{R}{0.3\textwidth} %
	\centering
 \vspace{-0.2in}
	\caption{Results on MetaQA-3 dataset.}
	\label{tab:metaqa}%
	\resizebox{\linewidth}{!}{
		\begin{tabular}{@{}l|c@{}}
			\toprule
			Method & \textbf{MetaQA-3} \\
            \midrule
            & Hit@1 \\
			\midrule
            RoG & 84.8 \\
            RoG+pretraining & 88.9 \\
            \grag & \textbf{98.6} \\
			\bottomrule
		\end{tabular}%
		}
\end{wraptable}%

\grag employs ReaRev~\citep{mavromatis2022rearev} as its GNN retriever, which is a powerful GNN for deep KG reasoning. In this section, we ablate on the impact of the GNN used for retrieval, i.e., how strong and weak GNNs affect KGQA performance. We experiment with GraftNet~\citep{sun-etal-2018-open} and NSM~\citep{he2021improving} GNNs, which are less powerful than ReaRev at KGQA.
The results are presented in Table~\ref{tab:gnn_model}. As shown, strong GNNs (ReaRev) are required in order to improve RAG at KGQA. Retrieval with weak GNNs (NSM and GraftNet) underperfoms retrieval with ReaRev by 9.2--9.8\%  and retrieval with RoG by 5.3--5.9\% points at H@1.

\subsection{MetaQA-3}

Table~\ref{tab:metaqa} presents results on the MetaQA-3 dataset, which requires 3-hop reasoning. RoG is a LLM-based retrieval which cannot handle the multi-hop KG information effectively and as a result, RoG underperforms (even when using additional pretraining data from WebQSP \& CWQ datasets). On the other hand, \grag relies on GNNs that are able to retrieve useful graph information for multi-hop questions and achieves an almost perfect performance of 98.6\% at Hit@1.

\subsection{Retrieval Augmentation}

\begin{table*}[h]
	\centering
	\caption{Performance comparison of retrieval augmentation approaches (extended).}
	\label{tab:all_retrieval}%
	\resizebox{\columnwidth}{!}{
	\begin{threeparttable}
		\begin{tabular}{@{}ll|cccccccc@{}}
			\toprule
			 \multirow{2}{*}{\textbf{\textcolor{olive}{Retriever}}} & \multirow{2}{*}{\textbf{\textcolor{blue}{KGQA Model}}} & \multirow{2}{*}{\textbf{\#LLM Calls}} & \multicolumn{3}{c}{\textbf{WebQSP}} & \multicolumn{3}{c}{\textbf{CWQ}} &\multirow{2}{*}{\textbf{Avg.}}\\  \cmidrule(l){4-6} \cmidrule(l){7-9} 
		      & & (total) & Hit$^\ast$ & H@1 & F1 & Hit$^\ast$ & H@1 & F1 \\
            \midrule
            \multirow{2}{*}{Dense Subgraph} & \textcolor{teal}{(i)} \;ReaRev + SBERT & 0 & -- & 76.4 & 70.9 & -- & 52.9 & 47.8 & --\\
             & \textcolor{violet}{(ii)} ReaRev + \lmsr & 0 & -- & 77.5 & 72.8 & -- & 52.7 & 49.1 & -- \\
             
            \midrule
            None & \multirow{4}{*}{LLaMA2-Chat-7B (tuned)} & 1 & 65.6 & 60.4 & 49.7 & 40.1 & 36.2 & 33.8 & 47.63 \\
             \textcolor{brown}{(iii)} LLM-based &  & 4 & 85.7 & 80.0 & 70.8 & 62.6 & 57.8 & 56.2 & 68.85\\
             \grag: \textcolor{teal}{(i)} & & 1 & 85.7 & 80.6   & 71.3 & 66.8 & 61.7 & 59.4 & 70.92\\ 
             \grag: \textcolor{violet}{(ii)} & & 1 &85.0 & 80.3 & 71.5 & 66.2 & 61.3 & 58.9 & 70.50 \\
             \midrule
             \grag: \textcolor{teal}{(i)} + \textcolor{violet}{(ii)} & \multirow{4}{*}{LLaMA2-Chat-7B (tuned)} & 1 & 87.2 & 81.0 & 71.7 & 65.5 & 59.5 & 57.5 & 70.40\\
             
             \grag: \textcolor{teal}{(i)} + \textcolor{brown}{(iii)} & & 4 & \textbf{90.7} & \textbf{82.8} & \textbf{73.5} &\textbf{68.7} & 62.8 & 60.4 & \textbf{73.15}\\
            \grag: \textcolor{violet}{(ii)} + \textcolor{brown}{(iii)} &  & 4 & 89.9 & 82.4 & 73.4 & 67.9 & \textbf{63.0} & \textbf{61.0} & 72.93\\
            \grag: \textcolor{teal}{(i)} + \textcolor{violet}{(ii)} + \textcolor{brown}{(iii)} &  & 4 & 90.1 & 81.7 & 72.3 & 67.3 & 61.5 & 59.1 & 72.00\\

            \midrule
            None & \multirow{3}{*}{LLaMA2-Chat-7B} & 1 & 64.4 & -- & -- & 34.6 & -- & -- & -- \\
             \grag: \textcolor{teal}{(i)} + \textcolor{violet}{(ii)} &  & 1 & 86.8 & -- & -- & 62.9 & -- & -- & --\\
             
             \grag: \textcolor{teal}{(i)} + \textcolor{brown}{(iii)} & & 4 & 88.5 & -- & -- & 62.1 & -- & -- & --\\

			\bottomrule
		\end{tabular}%
        
		\end{threeparttable}
}

\end{table*}%

Table~\ref{tab:all_retrieval} has the extended results of Table~\ref{tab:retrieval}, showing performance  results on all three metrics (Hit / H@1 / F1) with respect to the retrieval method used. Overall, \grag improves the vanilla LLM by 149--182\%, when employing the same number of LLM calls for retrieval.

\subsection{Prompt Ablation}

When using RAG, LLM performance depends on the prompts used. To ablate on the prompt impact, we experiment with the following prompts:
\begin{itemize}
 \item \textbf{Prompt A}: 
    \begin{center}
    \fbox{\parbox{4.in}{\raggedright
    \texttt{Based on the \underline{reasoning paths}, please answer the given question. Please keep the answer as simple as possible and return all the possible answers as a list.\textbackslash n \\
    \underline{Reasoning Paths}: \{Reasoning Paths\} \textbackslash n \\
    Question: \{Question\}}}}
    \end{center}
 \item \textbf{Prompt B}: 
    \begin{center}
    \fbox{\parbox{4.in}{\raggedright
    \texttt{Based on the \underline{provided knowledge}, please answer the given question. Please keep the answer as simple as possible and return all the possible answers as a list.\textbackslash n \\
    \underline{Knowledge}: \{Reasoning Paths\} \textbackslash n \\
    Question: \{Question\}}}}
    \end{center}

\item \textbf{Prompt C}: 
    \begin{center}
    \fbox{\parbox{4.in}{\raggedright
    \texttt{\underline{Your tasks is to use the following facts} \underline{and answer the question}. 
    \underline{Make sure that you use the information} \underline{from the facts provided.} Please keep the answer as simple as possible and return all the possible answers as a list.\textbackslash n \\
    \underline{The facts are the following}: \{Reasoning Paths\} \textbackslash n \\
    Question: \{Question\}}}}
    \end{center}
\end{itemize}

\begin{table*}[h]
	\centering
	\caption{Performance comparison (\%Hit) based on different input prompts.}
	\label{tab:prompts}%
	\resizebox{0.5\columnwidth}{!}{
	\begin{threeparttable}
		\begin{tabular}{@{}ll|cc@{}}
			\toprule
			
		      & & \textbf{WebQSP} & \textbf{CWQ} \\
            \midrule
            \multirow{2}{*}{Prompt A} & RoG & 84.8 & 56.4\\
             & \grag & 86.8 & 62.9 \\
             \midrule
             \multirow{2}{*}{Prompt B} & RoG & 84.3 & 55.2\\
             & \grag & 85.2 & 61.7 \\

              \midrule
             \multirow{2}{*}{Prompt C} & RoG & 81.6   &  51.8 \\
             & \grag &  84.4  & 59.4 \\
			
			\bottomrule
		\end{tabular}%
        
		\end{threeparttable}
}

\end{table*}%

We provide the results based on different input prompts in Table~\ref{tab:prompts}. As the results indicate, \grag outperforms RoG in all cases, being robust at the prompt selection.

\subsection{Effect of Training Data}

\captionsetup{width=\columnwidth}
\begin{table}[h]
	\centering
	\caption{Performance results based on different training data.}
	\label{tab:training}%
	\resizebox{\columnwidth}{!}{
		\begin{tabular}{@{}l|ccc|ccc@{}}
			\toprule
            \multirow{2}{*}{Method} & \multicolumn{3}{c|}{\textbf{WebQSP}} & \multicolumn{3}{c}{\textbf{CWQ}} \\ \cmidrule(l){2-4 } \cmidrule(l){5-7 } 
			 & Training Data (Retriever) & Training Data (KGQA Model) & Hit & Training Data (Retriever) & Training Data (KGQA Model) & Hit \\
            \midrule
            UniKGQA & WebQSP & WebQSP & 77.2 & CWQ & CWQ & 51.2 \\
            \midrule
            \multirow{3}{*}{RoG} & WebQSP & WebQSP & 81.5 & CWQ & CWQ & 59.1 \\
            & WebQSP+CWQ & None & 84.8 & WebQSP+CWQ & None & 56.4 \\
             & WebQSP+CWQ & WebQSP+CWQ & 85.7 & WebQSP+CWQ & WebQSP+CWQ & 62.6 \\

			\midrule
            \multirow{2}{*}{\grag} & WebQSP & None & \underline{86.8} & CWQ & None & \underline{62.9} \\
            & WebQSP & WebQSP+CWQ & \textbf{87.2} & CWQ & WebQSP+CWQ & \textbf{66.8} \\
            
			\bottomrule
		\end{tabular}%
		}
\end{table}%

Table~\ref{tab:training} compares performance of different methods based on the training data used for training the retriever and the KGQA model. For example, \grag trains a GNN model for retrieval and uses a LLM for KGQA, which can be fine-tuned or not. As the results show, \grag outperforms the  competing methods (RoG and UniKGQA) by either fine-tuning the KGQA model or not, while it uses the same or less data for training its retriever.

\subsection{Graph Effect} \label{app:subgraph}
GNNs operate on dense subgraphs, which might include noisy information. A question that arises is whether removing irrelevant information from the subgraph would improve GNN retrieval. We experiment with SR~\citep{zhang2022subgraph}, which learns to prune question-irrelevant facts from the KG. As shown in Table~\ref{tab:subgraph}, although SR can improve the GNN reasoning results -- see row (a) vs. (b) at CWQ --, the retrieval effectiveness deteriorates; rows (c) and (d). After examination, we found that the sparse subgraph may contain disconnected KG parts. In this case, \grag's extraction of the shortest paths fails, and \grag returns empty KG information. 

\begin{table*}[h]
	\centering
	\caption{Performance comparison on different subgraphs.}
	\label{tab:subgraph}%
	\resizebox{\columnwidth}{!}{
	\begin{threeparttable}
		\begin{tabular}{@{}ll|cccccc@{}}
			\toprule
			 \multirow{2}{*}{\textbf{\textcolor{olive}{Retriever}}} & \multirow{2}{*}{\textbf{\textcolor{blue}{KGQA Model}}} &  \multicolumn{3}{c}{\textbf{WebQSP}} & \multicolumn{3}{c}{\textbf{CWQ}} \\  \cmidrule(l){3-5} \cmidrule(l){6-8} 
		      & & Hit$^\ast$ & H@1 & F1 & Hit$^\ast$ & H@1 & F1 \\
            \midrule
            a) Dense Subgraph & \textcolor{teal}{(A)} ReaRev + \lmsr  & -- & 77.5 & 72.8 & -- & 52.7 & 49.1  \\
             b) Sparse Subgraph~\citep{zhang2022subgraph}& \textcolor{violet}{(B)} ReaRev + \lmsr &  -- & 74.2 & 69.8 & -- & 53.3 & 49.7   \\
            \midrule
             c) \grag: \textcolor{teal}{(A)} & \multirow{2}{*}{LLaMA2-Chat-7B (tuned)}  &85.0 & 80.3 & 71.5 & 66.2 & 61.3 & 58.9 \\
             d) \grag: \textcolor{violet}{(B)} &  &83.4 & 78.9 & 69.8 & 60.6 & 55.6 & 53.3  \\
			\bottomrule
		\end{tabular}%
        
		\end{threeparttable}
}

\end{table*}%

\section{Limitations} \label{app:limit}

\grag assumes that the KG subgraph, on which the GNN reasons, contains answer nodes. However, as the subgraph is decided by tools such as entity linking and neighborhood extraction, errors in these tools, e.g., unlinked entities, can result to subgraphs that do not include any answers. For example, Table~\ref{tab:data} shows that CWQ subgraphs contain an answer in 79.3\% of the questions. In such cases, \grag cannot retrieve the correct answers to help the LLM answer the questions faithfully. In addition, if the KG has disconnected parts, the shortest path extraction algorithm of \grag may return empty reasoning paths (see Appendix~\ref{app:subgraph}).

\section{Broader Impacts} \label{app:impacts}
\grag is a method that grounds the LLM generations for QA using ground-truth facts from the KG. As a result, \grag can have positive societal impacts by using KG information to alleviate LLM hallucinations in tasks such as QA.

\end{document}